\newcommand{\cellpurple}{\cellcolor{pennpurple!15}}
\newcommand{\cellred}{\cellcolor{pennred!15}}
\newcommand{\cellorange}{\cellcolor{pennorange!15}}
\newcommand{\cellgreen}{\cellcolor{penngreen!15}}
\newcommand{\cellblue}{\cellcolor{pennblue!15}}
	\algnewcommand{\LeftComment}[1]{\Statex \(\triangleright\) #1}
\newcommand{\myparagraph}[1]{\needspace{1\baselineskip}\medskip\noindent {\bf #1}}
\def\Tr{\mathsf{T}}
\newtheoremstyle{custom}
  {\topsep}   % Space above
  {\topsep}   % Space below
  {\itshape}  % Body font
  {0pt}       % Indent amount (empty value is the same as 0pt)
  {\bfseries} % Theorem head font
  {}          % Punctuation after theorem head
  {5pt plus 1pt minus 1pt} % Space after theorem head
  {\thmname{#1}\thmnumber{ #2}:\thmnote{ #3.}} % Theorem head spec
\newtheoremstyle{custom_no_it}
  {\topsep}   % Space above
  {\topsep}   % Space below
  {}          % Body font
  {0pt}       % Indent amount (empty value is the same as 0pt)
  {\bfseries} % Theorem head font
  {}          % Punctuation after theorem head
  {5pt plus 1pt minus 1pt} % Space after theorem head
  {\thmname{#1}\thmnumber{ #2}:\thmnote{ #3.}} % Theorem head spec
\theoremstyle{custom}
\newtheorem{proposition}{\hspace{0pt}\bf Proposition}
\newtheorem{definition}{\hspace{0pt}\bf Definition}
\theoremstyle{custom_no_it}
\newtheorem{remark}{\hspace{0pt}\bf Remark}
\begin{document}

\title{Synthesizing Decentralized Controllers with Graph Neural Networks and Imitation Learning}

\author{Fer\hspace{0.015cm}nando~Gama,~%\IEEEmembership{Student~Member,~IEEE,}
        Qingbiao~Li,~%\IEEEmembership{Student~Member,~IEEE,}
        Ekaterina~Tolstaya,~%\IEEEmembership{Student~Member,~IEEE,}
        Amanda Prorok,~%\IEEEmembership{Senior~Member,~IEEE,},
        and~Alejandro~Ribeiro\vspace{-6mm}%,~\IEEEmembership{Member,~IEEE}% <-this % stops a space
\thanks{The authors are supported by ARL DCIST CRA W911NF-17-2-0181. Q. Li and A. Prorok are also supported by EPSRC grant EP/S015493/1. F. Gama is with the Dept. Elect. Comput. Eng., Rice Univ., E. Tolstaya and A. Ribeiro are with the Dept. Elect. Syst. Eng., Univ. of Pennsylvania., and Q. Li and A. Prorok are with the Dept. Comput. Sci. Technol., Cambridge University. Email: fgama@rice.edu, \{eig,aribeiro\}@seas.upenn.edu, and \{ql295,asp45\}@cam.ac.uk. Preliminary discussions appear in \cite{Tolstaya19-Flocking, Li20-Planning}.
}
}

% Headers:
\markboth{IEEE TRANSACTIONS ON SIGNAL PROCESSING (SUBMITTED)}%
{Synthesizing Decentralized Controllers with Graph Neural Networks and Imitation Learning}

\maketitle

\begin{abstract}
Dynamical systems consisting of a set of autonomous agents face the challenge of having to accomplish a global task, relying only on local information. While centralized controllers are readily available, they face limitations in terms of scalability and implementation, as they do not respect the distributed information structure imposed by the network system of agents. Given the difficulties in finding optimal decentralized controllers, we propose a novel framework using graph neural networks (GNNs) to \emph{learn} these controllers. GNNs are well-suited for the task since they are naturally distributed architectures and exhibit good scalability and transferability properties. We show that GNNs learn appropriate decentralized controllers by means of imitation learning, leverage their permutation invariance properties to successfully scale to larger teams and transfer to unseen scenarios at deployment time. The problems of flocking and multi-agent path planning are explored to illustrate the potential of GNNs in learning decentralized controllers.
% EDICS:
% 55. SIPG-LRNM Learning models and methods on graphs < NEG SIGNAL PROCESSING FOR NETWORKS AND GRAPHS
\end{abstract}

\begin{IEEEkeywords}
decentralized control, graph neural networks, graph signal processing, flocking, path planning
\end{IEEEkeywords}

\IEEEpeerreviewmaketitle

%%%%%%%%%%%%%%%%%%%%%%%%%%%%%%%%%%%%%%%%%%%%%%%%%%%%%%%%%%%%%%%%%%%%%%%%%%%%%%%%
%%%%                         SECTION : Introduction                         %%%%
%%%%%%%%%%%%%%%%%%%%%%%%%%%%%%%%%%%%%%%%%%%%%%%%%%%%%%%%%%%%%%%%%%%%%%%%%%%%%%%%
%%%% sec:intro %%%%
%%%%%%%%%%%%%%%%%%%

\section{Introduction} \label{sec:intro}

%!TEX root = 00-controlGNN.tex

%%%%%%%%%%%%%%%%%%%%%%%%%%%%%%%%%%%%%%%%%%%%%%%%%%%%%%%%%%%%%%%%%%%%%%%%%%%%%%%%
%%%%                                                                        %%%%
%%%%                         SECTION : Introduction                         %%%%
%%%%                                                                        %%%%
%%%%%%%%%%%%%%%%%%%%%%%%%%%%%%%%%%%%%%%%%%%%%%%%%%%%%%%%%%%%%%%%%%%%%%%%%%%%%%%%
%%%% sec:intro %%%%
%%%%%%%%%%%%%%%%%%%

Network dynamical systems are widespread, spanning applications in multiagent robotics \cite{Nedic10-Consensus, Li20-Planning}, smart grids \cite{Mohsenian10-LoadControl, Owerko20-OPF}, sensor networks \cite{Owerko18-Sensor}, wireless communications \cite{Chiang07-PowerControl, Bechlioulis08-Robust} and traffic control \cite{Li18-DiffusionRNN}. In all of these situations we encounter teams of autonomous agents that sense their local environment and exchange information with nearby agents; which then proceed to control their individual actions in pursuit of a common global goal \cite[Ch. 3]{Ogata02-Control}. This somewhat mismatched specification of attaining a \emph{global} goal with \emph{local} sensing and interaction is the defining characteristic of a network dynamical system.

A possible approach to controlling an autonomous team is to collect information at a designated fusion center that decides on actions that the team executes. Such \emph{centralized} controllers for network systems are common -- see, e.g., \cite{Nedic10-Consensus, Mohsenian10-LoadControl, Chiang07-PowerControl, Bechlioulis08-Robust} -- and their implementation is preferred when feasible. Their advantage is that if communication delays are not significant, the resulting control problem is standard and we can deploy established techniques to design optimal controllers (Sec. \ref{sec:control}). Their disadvantage is that collecting sensor inputs and disseminating control actions, burdens communication networks. As we scale the number of agents in the team, communication delays ensue. This effectively limits applicability of centralized controllers to small teams.

Scalability to teams with large numbers of agents is more feasible with \emph{decentralized} controllers, in which agents decide on their own actions. This approach engenders scalability by design, but results in optimal controllers that are famously difficult to design \cite{Witsenhausen68-Counterexample}. The reason why this happens is that agents that have access to local information, are also agents that have access to different information. This results in the possibility of conflicting actions even for agents that are intent on cooperating (Sec. \ref{sec_decentralized_info}). If optimal decentralized controllers are unavailable, resorting to heuristics is warranted. In this paper we advocate the use of learned heuristics.

The main contribution of this paper is to develop the use of graph neural networks (GNNs) \cite{Gama20-GNNs, Ruiz21-GNNs} to learn decentralized controllers (Sec. \ref{sec:GNN}). More concretely, we model agent interactions with a communication graph and interpret sensing inputs and control actions as signals supported on the nodes of this graph. We then proceed to use graph convolutional neural networks (GCNNs) \cite{Bruna14-DeepSpectralNetworks, Defferrard17-ChebNets, Gama19-Archit} and graph recurrent neural networks (GRNNs) \cite{Seo18-GCRN, Ruiz20-GRNN} to learn maps from sensor inputs to control outputs. The maps are trained with imitation learning \cite{Ross10-ImitationLearning, Ross11-DAGger, Bagnell17-Aggrevated, Hussein17-ImitationLearning, Osa18-ImitationLearning} of expert --ideally but not necessarily optimal-- centralized controllers (Sec. \ref{subsec:imitation}).

The key to explain GNNs and their applicability in decentralized control is the notion of a graph filter \cite{Sandryhaila13-DSPG, Shuman13-SPG, Isufi20-EdgeNets}. Graph filters are generalizations of Euclidean convolutional filters and are formally defined as polynomials on matrix representations of the graph (Sec. \ref{subsec:graphFilters}). Given that the sparsity pattern on these matrix representations matches agent connectivity, graph filters can be implemented in a distributed manner. GCNNs rely on graph filter banks composed with \emph{pointwise} nonlinearities (Sec. \ref{subsec:GCNN}). Given that pointwise operations are local operations, GCNNs also admit distributed implementations. GRNNs also rely on graph filters and pointwise nonlinearities but differ on the incorporation of a time varying hidden state (Sec. \ref{subsec:GRNN}). This makes them more appealing in time varying partially observable problems, but does not affect the admissibility of decentralized implementations.

GNNs admit decentralized implementations by construction and are therefore viable parameterizations to learn decentralized controllers. That they can succeed is a matter of experimental evaluation. Here, we consider flocking (Sec. \ref{sec:flocking}) and path planning (Sec. \ref{sec:pathPlanning}) and deploy GCNNs and GRNNs that are trained to imitate expert centralized controllers. Our first conclusion is that imitation succeeds in both problems over a range of realistic system parameters:

\begin{list}{}{
                 \setlength{\labelwidth}{20pt}
                 \setlength{\leftmargin}{23pt}
                 \setlength{\labelsep}{3pt}
                 \setlength{\itemsep}{5pt}
                 \setlength{\topsep}{5pt}
                 \setlength{\parskip}{0pt}
              }

    \item[\textbf{(C1)}] \textbf{Successful Imitation.} GCNNs and GRNNs are decentralized architectures relying on local information exchanges with neighboring agents. They nevertheless successfully imitate centralized policies that rely on global information.

\end{list}

\vspace{3pt}

\noindent It is important to remark that in imitation learning we rely on expert  centralized controllers for \emph{offline} training of a GNN that we deploy \emph{online} for decentralized control. In this process of transference between systems observed at training to those observed at deployment, networks are likely to change. Asides from the admissibility of decentralized implementations, GNNs have equivariance and stability properties \cite{Gama20-Stability, Ruiz20-GRNN} that make this transference possible. A particular change of interest are permutations that result out of agent relabeling (Sec. \ref{sec:permutation}). Our second conclusion is to prove that:

\begin{list}{}{
                 \setlength{\labelwidth}{20pt}
                 \setlength{\leftmargin}{23pt}
                 \setlength{\labelsep}{3pt}
                 \setlength{\itemsep}{5pt}
                 \setlength{\topsep}{5pt}
                 \setlength{\parskip}{0pt}
              }

    \item[\textbf{(C2)}] \textbf{Permutation Invariance.} Optimal GNNs are invariant to permutations of the agents (Proposition \ref{thm:permutationInvariance}) if state observations are properly crafted (Remarks \ref{rmk_sys_invariance_no} and \ref{rmk_sys_invariance_yes}).

\end{list}

\vspace{3pt}

\noindent Conclusion (C2) is a direct consequence of the permutation equivariance of GCNNs \cite{Gama20-Stability} and GRNNs \cite{Ruiz20-GRNN}. It guarantees that optimal GNNs are invariant to relabeling of the agents of the team. This is important because most problems in decentralized systems satisfy this. This conclusion is also relevant for problems that are not invariant to relabeling. GNNs are not appropriate learning parameterizations for them.

When moving from training to execution, network realizations are unlikely to be exact permutations of each other. Rather, we expect to see \emph{online} networks that are close to permutations of each other. Our flocking (Sec. \ref{sec:flocking}) and path planning (Sec. \ref{sec:pathPlanning})  experiments demonstrate a third important conclusion:

\begin{list}{}{
                 \setlength{\labelwidth}{20pt}
                 \setlength{\leftmargin}{23pt}
                 \setlength{\labelsep}{3pt}
                 \setlength{\itemsep}{5pt}
                 \setlength{\topsep}{5pt}
                 \setlength{\parskip}{0pt}
              }

    \item[\textbf{(C3)}] \textbf{Transference.} GCNNs and GRNNs can be transferred from systems drawn at \emph{offline} training into systems observed during \emph{online} execution. This is true even if offline and online systems are not exact permutations of each other [cf. (C2)].

\end{list}

\vspace{3pt}

\noindent While conclusion (C3) is not unexpected due to the stability property of GNNs \cite{Gama20-GNNs}, the experiments in Secs.~\ref{sec:flocking}~and~\ref{sec:pathPlanning} demonstrate that this result holds over a range of reasonable system parameters. We also show that GNNs can run over time-varying graphs (Sec.~\ref{sec:timeVarying}), further supporting (C3) for this practically relevant scenario.

Finally, GNNs are also scalable to systems larger than those they were trained on:

\noindent

\begin{list}{}{
                 \setlength{\labelwidth}{20pt}
                 \setlength{\leftmargin}{23pt}
                 \setlength{\labelsep}{3pt}
                 \setlength{\itemsep}{5pt}
                 \setlength{\topsep}{5pt}
                 \setlength{\parskip}{0pt}
              }
    \item[\textbf{(C4)}] \textbf{Transference at Scale.}  GCNNs and GRNNs can be transferred to systems with larger numbers of nodes.
\end{list}

\vspace{3pt}

\noindent This key property of GNNs allows for imitation learning of computationally intensive experts. GCNNs and GRNNs are trained offline on small systems where the expert is feasible, and then scaled online to larger systems where an expert is not computationally available. Conclusion (C4) is supported by the known transferability properties of GNNs \cite{Ruiz20-Transferability}.

Overall, the main objective is to illustrate how the use of GNNs in combination with imitation learning can lead to a powerful framework for learning decentralized controllers that scale up to large multiagent systems. We describe imitation learning, discuss the properties of GNNs, and develop insights on why these properties play a key role in learning powerful decentralized controllers. We place particular emphasis in the importance of converting a permutation equivariant controller into a permutation invariant formulation that allows for scalability. We then illustrate the success of the proposed framework with two, seemingly dissimilar, examples drawn from robotics: flocking \cite{Tolstaya19-Flocking} and path planning \cite{Li20-Planning}. We recast these problems in a permutation invariant manner and include more insightful numerical experiments. We consider a wider range of architectures and useful practical scenarios. The new numerical experiments better showcase the properties of permutation invariance, transferability and scalability. In essence, we present a unified framework for synthesizing decentralized controllers with graph neural networks and imitation learning. This same problem formulation is useful in describing many other problems, both in the realm of robotics and beyond, providing an efficient and straightforward methodology for learning scalable solutions from computationally inefficient experts.

The proposed framework learns decentralized controllers by using GNNs and shows promising results in relevant robotics problems. These results are backed theoretically by properties GNNs have that make them sensible parametrization choices. The proposed use of GNNs for learning decentralized controllers opens interesting directions for future research, including stability analysis, exploitation of the dynamic models and exploration of alternative learning frameworks (Sec.~\ref{sec:conclusions}). Conceptual insights and practical tools developed in graph signal processing can potentially make important contributions to learning decentralized controllers and understanding their behavior.

%%%%%%%%%%%%%%%%%%%%%%%%%%%%%%%%%%%%%%%%%%%%%%%%%%%%%%%%%%%%%%%%%%%%%%%%%%%%%%%%
%%%%                  SECTION : Decentralized Controllers                   %%%%
%%%%%%%%%%%%%%%%%%%%%%%%%%%%%%%%%%%%%%%%%%%%%%%%%%%%%%%%%%%%%%%%%%%%%%%%%%%%%%%%
%%%% sec:control %%%%
%%%%%%%%%%%%%%%%%%%%%

\section{Optimal Decentralized Control} \label{sec:control}

%!TEX root = 00-controlGNN.tex

%%%%%%%%%%%%%%%%%%%%%%%%%%%%%%%%%%%%%%%%%%%%%%%%%%%%%%%%%%%%%%%%%%%%%%%%%%%%%%%%
%%%%                                                                        %%%%
%%%%                  SECTION : Decentralized Controllers                   %%%%
%%%%                                                                        %%%%
%%%%%%%%%%%%%%%%%%%%%%%%%%%%%%%%%%%%%%%%%%%%%%%%%%%%%%%%%%%%%%%%%%%%%%%%%%%%%%%%
%%%% sec:control %%%%
%%%%%%%%%%%%%%%%%%%%%

Consider a team of $N$ agents $\ccalV = \{1,\ldots,N\}$. At time $t \in \{0,1,2,\ldots\}$, each agent $i \in \ccalV$ is described by a state $\bbx_{i}(t) \in \reals^{F}$ and is equipped with the faculty of autonomously deciding on an action $\bbu_{i}(t) \in \reals^{G}$. States and actions of all the agents in the team are collected in the state matrix $\bbX(t) \in \reals^{N \times F}$ and the action matrix $\bbU(t) \in \reals^{N \times G}$ in which individual rows correspond to the state and action of each agent,
% eq:stateMatrix
\begin{equation} \label{eq:stateMatrix}
    \bbX(t) = 
    \begin{bmatrix} 
        \bbx_{1}(t)^{\Tr} \\ 
        \vdots \\ 
        \bbx_{N}(t)^{\Tr}
    \end{bmatrix} ~, 
    \qquad
    \bbU(t) = 
    \begin{bmatrix} 
        \bbu_{1}(t)^{\Tr} \\ 
        \vdots \\ 
        \bbu_{N}(t)^{\Tr}
    \end{bmatrix} ~.
\end{equation}
The collective effect of the actions of \emph{all} agents is to drive the evolution of the state of the \emph{team} as determined by a Markov model $\mbP$. Thus, the state of the system at time $t+1$ is drawn from the distribution
% eq:dynamicModel
\begin{equation} \label{eq:dynamicModel}
    \bbX(t+1) \sim \mbP \Big( \bbX \given \bbX(t), \bbU(t) \Big).
\end{equation}
When the system is in state $\bbX(t)$ the execution of action $\bbU(t)$ incurs a cost $c(\bbX(t), \bbU(t))$. Given this specification it is customary to search for a possibly randomized policy that minimizes the expected cost over the system's trajectory. Formally, we define the control policy $\Pi_{\text{c}}$ as a conditional distribution from which we draw actions as $\bbU(t) = \Pi_{\text{c}}(\bbU \given \bbX(t))$. Introducing a discount factor $\gamma$, the optimal policy is defined as the one that minimizes the expected cost
% eq:generalObjective
\begin{equation} \label{eq:generalObjective}
    \Pi_{\text{c}}^*
      = \argmin_{\Pi_{\text{c}}}\mbE
           \bigg[ 
               \sum_{t=0}^{\infty}\!\gamma^t  
                  c\Big(\bbX(t), \Pi_{\text{c}}(\bbX(t)) \Big)
                     \bigg]
      = \argmin_{\Pi_{\text{c}}} 
            J (\Pi_{\text{c}}),
\end{equation}
where we defined $J(\Pi_{\text{c}})=\mbE[ \sum_{t=0}^{\infty} \gamma^t c(\bbX(t), \Pi_{\text{c}}(\bbX(t))) ]$ in the second equality to represent the average long term discounted cost associated with policy $\Pi_{\text{c}}$. Depending on the dynamical model $\mbP$ finding the optimal policy $\Pi^*_{\text{c}}$ has varying degrees of complexity. In this paper we assume that $\Pi^*_{\text{c}}$ has been computed and is available to facilitate the design of a \emph{decentralized} controller; see Remark \ref{rmk_optimal_centralized_policy}.

\subsection{Decentralized Information Structure}\label{sec_decentralized_info}

Decentralized control problems are characterized by limited access to agents' states and actions. To be more specific introduce a local connectivity pattern specified by a symmetric graph $\ccalG = (\ccalV, \ccalE)$ in which the vertices $\ccalV$ represent nodes and the edges $\ccalE \subseteq \ccalV \times \ccalV$ represent communication links. Thus, when $(i,j)\in\ccalE$ agents $i$ and $j$ share a communication link and are able to exchange information -- see Secs. \ref{sec:flocking} and \ref{sec:pathPlanning} for concrete communication models. 
The set $\ccalN_{i} = \{j \in \ccalV: (j,i) \in \ccalE\}$ of nodes $j$ that share an edge with $i$ is called the neighborhood of $i$. In decentralized control multiple hop neighbors are also of interest. These are recursively defined as $\ccalN_i^0=\{i\}$ and $\ccalN_i^{k+1}=\ccalN_i^k\cup\ccalN_i$. Thus, the $k$-hop neighborhood  $\ccalN_i^k$ is made up of nodes $j$ that can reach $i$ in no more than $k$ edge transitions. The $k$-hop neighborhoods determine the information that is available to node $i$ at time $t$. Indeed, assuming that communication between nodes follows the same clock of sensing and control, the information that is available at node $i$ at time $t$ is given by the set
%
% \begin{equation} \label{eq:neighborHistory}
%   \ccalX_{i}(t) 
%       = \bigcup_{k, j\in\ccalN_i^k} 
%           \Big\{\,
%               \bbx_{j} (t-k), \, \bbu_{j} (t-k) \,
%                     \Big\} ~.
% \end{equation}
%
\begin{equation} \label{eq:neighborHistory}
\ccalX_{i}(t) = \bigcup_{k=0}^{\infty} \Big\{ \bbx_{j}(t-k) \ , \ j  \in \ccalN_{i}^{k}(t)\Big\}
\end{equation}
That is, agent $i$ has access to its \emph{current} state $\bbx_i(t)$ but the information of other agents makes it to agent $i$ with some delay. The states  $\bbx_j(t-1)$ of 1-hop neighbors $j \in \ccalN_{i}^1$ at time $t-1$ can be known to node $i$, but the current state need not. In general, agent $i$ can know states $\bbx_j(t-k)$ of $k$-hop neighbors $j \in \ccalN_{i}^{k}$ with $k$-time-unit delays but does not have access to more current information except for those $k$-hop neighbors that are also $(k-1)$-hop neighbors. 

A decentralized controller is one that respects the local information structure defined by \eqref{eq:neighborHistory}. Formally, define local policies $\pi_i$ as those that map information $\ccalX_{i}(t)$ to actions $\bbu_i(t) = \pi_i (\ccalX_{i}(t))$. A decentralized control policy $\Pi$ is made up of the joint execution of these decentralized policies. Upon defining $\ccalX(t)=[\ccalX_1(t),\ldots,\ccalX_N(t)]$ we write \emph{decentralized} control policies as maps
\begin{equation} \label{eqn_local_policies}
    \bbU(t) = \Pi\Big(\ccalX(t)\Big), 
       \text{~with~}  
          \bbu_i(t) = \pi_i \Big(\ccalX_{i}(t)\Big) .
\end{equation}
The optimal decentralized control problem is the equivalent of \eqref{eq:generalObjective} when policies are restricted to the form in \eqref{eqn_local_policies}
% eq:optimalDecentralizedController
\begin{equation} \label{eq:optimalDecentralizedController}
    \Pi^*
      = \argmin_{\Pi}\mbE
           \bigg[ 
               \sum_{t=0}^{\infty}\!\gamma^t  
                  c\Big(\bbX(t), \Pi\big(\ccalX(t)\big) \Big)
                     \bigg]
      = \argmin_{\Pi} 
            J (\Pi),
\end{equation}
As is the case of \eqref{eq:generalObjective} the difficulty of finding $\Pi^*$ depends on the dynamical model $\mbP$. However, a cursory inspection of \eqref{eq:optimalDecentralizedController} suggests that, in general, decentralized control is much more difficult than centralized control. This is because the  information structure defined by \eqref{eq:neighborHistory} involves trajectory histories. The centralized controller is Markov and depends on the current state $\bbX(t)$. The decentralized controller requires storage and processing of past information. It is, in fact, well known that optimal decentralized control is intractable even in the case of linear quadratic regulators that have elementary centralized solutions \cite{Witsenhausen68-Counterexample}. 

This increase in complexity of decentralized controllers relative to centralized controllers prompts the development of learning techniques for decentralized control. The approach advanced by this paper relies on the use of GNNs (Secs. \ref{subsec:graphFilters} through \ref{subsec:GRNN}) and imitation learning (Sec. \ref{subsec:imitation}). The use of GNNs is justified because they respect the local information structure of decentralized control by design. The use of imitation learning is motivated by the relative simplicity of designing centralized controllers relative to the design of decentralized controllers.

%%%%%%%%%%%%%%%%%%%%%%%%%%%%%%%%%%%%%%%%%%%%%%%%%%%%%%%%%%%%%%%%%%%%%%%%
%%%   R   E   M   A   R   K   %%%%%%%%%%%%%%%%%%%%%%%%%%%%%%%%%%%%%%%%%%
%%%%%%%%%%%%%%%%%%%%%%%%%%%%%%%%%%%%%%%%%%%%%%%%%%%%%%%%%%%%%%%%%%%%%%%%
\begin{remark}[Centralized controllers]\label{rmk_optimal_centralized_policy}\normalfont We have defined the optimal centralized controller in \eqref{eq:generalObjective} and the optimal decentralized controller in \eqref{eq:optimalDecentralizedController} to highlight their differences. The learning methodology that we develop in Sec. \ref{sec:GNN} is predicated on the availability of \emph{an expert} (an potentially centralized) controller to imitate, which need not be \emph{the} optimal centralized controller in \eqref{eq:generalObjective}. This is important because even in cases where optimal centralized controllers are difficult to find, centralized control heuristics that outperform decentralized heuristics are often easier to devise; see Sec. \ref{sec:pathPlanning} for an example.
\end{remark}

%%%%%%%%%%%%%%%%%%%%%%%%%%%%%%%%%%%%%%%%%%%%%%%%%%%%%%%%%%%%%%%%%%%%%%%%
%%%   R   E   M   A   R   K   %%%%%%%%%%%%%%%%%%%%%%%%%%%%%%%%%%%%%%%%%%
%%%%%%%%%%%%%%%%%%%%%%%%%%%%%%%%%%%%%%%%%%%%%%%%%%%%%%%%%%%%%%%%%%%%%%%%

% \red{

% \begin{remark}\label{rmk_model_availability} \normalfont Maybe we want to say something about how the model is used. The comment we have after Equation (2). Another issue for Monday.
% \end{remark}

% }

%%%%%%%%%%%%%%%%%%%%%%%%%%%%%%%%%%%%%%%%%%%%%%%%%%%%%%%%%%%%%%%%%%%%%%%%%%%%%%%%
%%%%                    SECTION : Graph Neural Networks                     %%%%
%%%%%%%%%%%%%%%%%%%%%%%%%%%%%%%%%%%%%%%%%%%%%%%%%%%%%%%%%%%%%%%%%%%%%%%%%%%%%%%%
%%%% sec:GNN %%%%
%%%%%%%%%%%%%%%%%

\section{Graph Neural Networks} \label{sec:GNN}

%!TEX root = 00-controlGNN.tex

%%%%%%%%%%%%%%%%%%%%%%%%%%%%%%%%%%%%%%%%%%%%%%%%%%%%%%%%%%%%%%%%%%%%%%%%%%%%%%%%
%%%%                                                                        %%%%
%%%%                    SECTION : Graph Neural Networks                     %%%%
%%%%                                                                        %%%%
%%%%%%%%%%%%%%%%%%%%%%%%%%%%%%%%%%%%%%%%%%%%%%%%%%%%%%%%%%%%%%%%%%%%%%%%%%%%%%%%
%%%% sec:GNN %%%%
%%%%%%%%%%%%%%%%%

Optimal decentralized controllers are difficult to find due to the constraint in \eqref{eq:optimalDecentralizedController} that forces the controller to be a function only of neighboring information. To address this, we propose to parametrize $\bbU(t) = \sfPhi(\ccalX(t))$ with a function $\sfPhi$ that is naturally distributed, so that the constraint is always satisfied. In particular, we restrict our attention to controllers that can be computed by means of a graph neural network (GNN). While the obtained solutions are likely to be suboptimal, they are easy to train, can be computed in a distributed manner and exhibit several properties that make them a reasonable choice of parametrization, besides playing a key role in the practical success observed in Secs.~\ref{sec:flocking}~and~\ref{sec:pathPlanning}. In short, we restrict the functional optimization problem \eqref{eq:optimalDecentralizedController} to be a parameter optimization problem over the space of GNNs, which are nonlinear, local and distributed.

We introduce the framework of graph signal processing (GSP) and the fundamental concept of graph filters, allowing for a straightforward mathematical description of decentralized problems (Sec.~\ref{subsec:graphFilters}). Graph convolutional neural networks (GCNNs) build upon graph filters by including pointwise nonlinearities to conform a layer, and cascading several layers as means of increasing their representation power (Sec.~\ref{subsec:GCNN}). Graph recurrent neural networks (GRNNs) further incorporate the time dimension by learning a sequence of \emph{hidden} states that keep track of the relevant information in the system evolution (Sec.~\ref{subsec:GRNN}).
% We remark that all three of these architectures (namely, graph filters, GCNNs and GRNNs) exhibit the properties of permutation equivariance and stability which allows them to exploit the data structure to enhance learning, scale to a larger number of robots and transfer to unseen scenarios \cite{Gama20-Stability, Ruiz20-GRNN}.
Finally, we introduce imitation learning as a way of training these architectures to learn useful decentralized controllers (Sec.~\ref{subsec:imitation}).

%%%%%%%%%%%%%%%%%%%%%%%%%%%%%%%%%%%%%%%%%%%%%%%%%%%%%%%%%%%%%%%%%%%%%%%%%%%%%%%%
%%%%                         SUBSECTION : Graph Filters                     %%%%
%%%%%%%%%%%%%%%%%%%%%%%%%%%%%%%%%%%%%%%%%%%%%%%%%%%%%%%%%%%%%%%%%%%%%%%%%%%%%%%%
%%%% subsec:graphFilters
%%%%%%%%%%%%

\subsection{Graph filters} \label{subsec:graphFilters}

The distributed nature of the decentralized controllers comes from the fact that each agent can only communicate with other nearby agents. As discussed in Sec.~\ref{sec:control}, we describe this communication network by means of a graph $\ccalG = (\ccalV, \ccalE)$, where $\ccalV$ is the set of nodes (agents) and $\ccalE$ is the set of edges (communication links). A decentralized controller, thus, is only allowed to process information that propagates through this graph. We use graph signal processing (GSP) \cite{Sandryhaila13-DSPG, Shuman13-SPG, Ortega18-GSP} as the appropriate framework to learn such controllers.

A \emph{graph signal} $\sfx: \ccalV \to \reals$ is defined as a function that assigns a scalar value to each node and can be conveniently described by a vector $\bbx \in \reals^{N}$ where $[\bbx]_{i} = x_{i} = \sfx(i)$ is the signal value assigned to node $i$. We can extend this concept to describe the collection of states of all agents in the team $\bbX \in \reals^{N \times F}$ [cf. \eqref{eq:stateMatrix}], by assigning a $F$-dimensional vector to each agent $\sfX: \ccalV \to \reals^{F}$ so that $\sfX(i) = \bbx_{i} \in \reals^{F}$, corresponding to the rows of $\bbX$. For ease of exposition, we refer to $\bbX$ as a graph signal as well, even though, technically, $\bbX$ is a collection of $F$ graph signals.

A graph signal $\sfX : \ccalV \to \reals^{F}$ contains information only about the nodes in the graph $\ccalG$, but bears no relation with the edge set $\ccalE$ that determines the topology of the graph. This information is captured in another matrix $\bbS \in \reals^{N \times N}$ which is a description of the \emph{support} of the graph and satisfies $[\bbS]_{ij} = s_{ij}= 0$ whenever $i \neq j$ or $(j, i) \notin \ccalE$. Note that the support matrix $\bbS$ respects the sparsity of the graph since there is a zero entry whenever agents $i$ and $j$ do not share a communication link between each other. Examples of the support matrix typically used in the literature include the adjacency matrix \cite{Sandryhaila13-DSPG}, the Laplacian matrix \cite{Shuman13-SPG}, the Markov matrix \cite{Heimowitz17-MarkovGSP}, as well as normalized counterparts \cite{Gama20-Sketching}.

The fact that the support matrix $\bbS$ respects the sparsity of the graph is the key property that allows it to conveniently describe distributed operations in a straightforward manner. To see this, consider the linear operation $\bbS \bbX$ that results from applying the support matrix $\bbS$ to a graph signal $\bbX$. The output is another graph signal whose $(i,f)$th entry is computed as
% eq:graphShift
\begin{equation} \label{eq:graphShift}
    [\bbS \bbX]_{if} = \sum_{i=1}^{N} [\bbS]_{ij} [\bbX]_{jf} = \sum_{j \in \ccalN_{i} \cup i} s_{ij} x_{jf}.
\end{equation}
The operation $\bbS \bbX$ acts as a diffusion operator that \emph{shifts} the information contained in the state $\bbX$ across the graph, thus oftentimes $\bbS$ receives the name of \emph{graph shift operator} (GSO).% We use the term support matrix or GSO interchangeably.

The second equality in \eqref{eq:graphShift} holds due to the sparsity pattern of the support matrix $\bbS$, that is, there are nonzero entries only when $(j, i) \in \ccalE$. This implies that the graph signal that results from the linear operation $\bbS\bbX$ requires local information that is provided by neighboring nodes only. Likewise, each node can separately compute this output without knowledge of the rest of the graph, only knowing their one-hop neighbors. In essence, $\bbS \bbX$ is a \emph{local} and \emph{distributed} operator, and thus serves as the building block for learning \emph{decentralized} controllers.

Consider graph filters $\sfH: \reals^{N \times F} \to \reals^{N \times G}$ that map a graph signal $\bbX \in \reals^{N \times F}$ into another one $\bbY \in \reals^{N \times G}$ by means of a polynomial in the support matrix
% eq:graphFilter
\begin{equation} \label{eq:graphFilter}
    \bbY = \sum_{k=0}^{K} \bbS^{k} \bbX \bbH_{k} = \sfH(\bbX;\bbS, \ccalH)
\end{equation}
where $\ccalH = \{\bbH_{k} \in \reals^{F \times G} \ , \ k = 0 ,\ldots, K\}$ is the set of \emph{filter coefficients}, \emph{filter weights} or \emph{filter taps}.
% Conceptually, a graph filter of the form \eqref{eq:graphFilter} can be understood as an application of a bank of $F G$ filters to each of the $F$ graph signals $\bbx^{f}$ that form the columns of $\bbX$ and aggregating the corresponding outputs. To see this, note that $\bby^{g}$, the $g$th column of the graph signal $\bbY$, can be obtained as
% % eq:filterBank
% \begin{equation} \label{eq:filterBank}
%     \bby^{g} = \sum_{f=1}^{F} \bby^{gf} \ \text{with} \ \bby^{gf} = \sum_{k=0}^{K} h_{k}^{fg} \bbS^{k} \bbx^{f}
% \end{equation}
% %
% where $h_{k}^{fg} = [\bbH_{k}]_{fg}$ is the $(f,g)$th entry of the $k$th filter tap. A filter of the form \eqref{eq:graphFilter} inherits the name of FIR filter from its discrete-time counterpart. To see this, note that a discrete-time signal can be described within the GSP framework as being supported by a directed cycle graph. By adopting a support matrix $\bbS$ such that $[\bbS]_{i(i-1)} = 1$ and $0$ elsewhere, $\bby^{gf}$ in \eqref{eq:filterBank} becomes a regular circular convolution, i.e. $[\bby^{gf}]_{n} = \sum_{k=0}^{K} h_{k}^{fg} [\bbx^{f}]_{n-k+1}$. Therefore, when considering graph signals defined on a directed cycle graph support (i.e. discrete-time signals), \eqref{eq:filterBank} boils down to a regular convolution, and thus \eqref{eq:graphFilter} becomes a bank of FIR filters \cite{Vaidyanathan93-FilterBanks}.
These filters \eqref{eq:graphFilter} are local and distributed mappings between graph signals. They are local since they only require communication exchanges with one-hop neighbors%. Note that $\bbS^{k} = \bbS(\bbS^{k-1})$ and thus $\bbS^{k}\bbX$ can be computed by communicating $k$-times with one-hop neighbors and aggregating the transmitted information. On the other hand $\bbX \bbH_{k}$ is a linear combination of the rows of $\bbX$ which are the $F$ state values for each of the nodes, not requiring any communication between agents. In short, multiplications of $\bbX$ by the right have to respect the sparsity of the graph, since they mix values across different nodes, while multiplications of $\bbX$ by the left can be arbitrary, since they only mix state values within the same node (in a weight-sharing scheme). FIR graph filters
, and they are distributed since each node can compute the output of the filter separately, by aggregating the information shared $K$ times by their one-hop neighbors and then weighing this information by the filter taps contained in $\bbH_{k}$. We note that there is no need for the nodes to know $\bbS$ (or the entire graph) at implementation time, they only need to know their one-hop neighbors $\ccalN_{i}$, and the corresponding filter taps $\ccalH$.

%%%%%%%%%%%%%%%%%%%%%%%%%%%%%%%%%%%%%%%%
%%%%      REMARK: Choice of S       %%%% rmk:choiceS
%%%%%%%%%%%%%%%%%%%%%%%%%%%%%%%%%%%%%%%%
%%
\begin{remark}[Choice of $\bbS$] \label{rmk:choiceS} \normalfont
    The choice of support matrix $\bbS$ affects the performance of the resulting controller, and thus different alternatives exhibit different desirable properties \cite{Mateos19-TopologyInference}. However, no single choice has been shown to outperform others in a wide range of problems, and thus we keep a generic support matrix $\bbS$ that acts as a stand-in for any matrix that respects the sparsity of the graph. The results we obtain in this paper hold for any problem-specific choice of support matrix.
\end{remark}

\subsection{Graph convolutional neural networks} \label{subsec:GCNN}

Graph filters \eqref{eq:graphFilter} are linear maps between graph signals, and as such, are only able to capture linear relationships if used to design decentralized controllers. However, it is known that these controllers are typically nonlinear \cite{Witsenhausen68-Counterexample}. Therefore, we need a map that is capable of capturing nonlinear relationships if we are to design successful decentralized controllers.

Graph convolutional neural networks (GCNNs) are a cascade of operational blocks or \emph{layers}, each of which applies a graph filter \eqref{eq:graphFilter} followed by a pointwise nonlinearity \cite{Bruna14-DeepSpectralNetworks, Defferrard17-ChebNets, Gama19-Archit}. Let $\bbX_{\ell} \in \reals^{N \times F_{\ell}}$ be the signal obtained at the output of layer $\ell$. This output is computed as
% eq:GCNN
\begin{equation} \label{eq:GCNN}
    \bbX_{\ell} = \sigma \big( \sfH_{\ell} (\bbX_{\ell-1}; \bbS, \ccalH_{\ell}) \big)
\end{equation}
where $\sfH_{\ell}: \reals^{N \times F_{\ell-1}} \to \reals^{N \times F_{\ell}}$ is a graph filter of the form \eqref{eq:graphFilter} described by the set of filter coefficients $\ccalH_{\ell} = \{\bbH_{\ell k} \in \reals^{F_{\ell-1} \times F_{\ell}}, k = 0,\ldots,K_{\ell}\}$, and $\sigma:\reals \to \reals$ is a nonlinear function that, in an abuse of notation, is used in \eqref{eq:GCNN} to denote its element-wise application to the entries of the output of the graph filter. The input $\bbX_{0}$ to the first layer is the data, $\bbX_{0}=\bbX$, and we collect the output at the last layer
% eq:GCNNout
\begin{equation} \label{eq:GCNNout}
    \sfPhi(\bbX ; \bbS, \ccalH) = \bbX_{L}
\end{equation}
to be the output of the GCNN $\sfPhi$, and where $\ccalH = \{\ccalH_{\ell}, \ell=1,\ldots,L\}$ is the set of all filter taps used in all layers. The number of layers $L$, the number of features per layer $F_{\ell}$, the order of each graph filter $K_{\ell}$, and the nonlinear function $\sigma$ are all design choices, and are normally called \emph{hyperparameters} (in contrast to the filter taps, that often receive the name of parameters). In this respect, traditional techniques of hyperparameter selection hold for GCNNs as well \cite{Bergstra12-RandomSearch}.

GCNNs extend graph filters by including pontwise nonlinearities and cascading several operational blocks. Thus, they retain many of the properties that graph filters exhibit, namely their local and distributed nature. Therefore, GCNNs are suitable choices for learning nonlinear decentralized controllers.

\subsection{Graph recurrent neural networks} \label{subsec:GRNN}

Graph recurrent neural networks (GRNNs) are nonlinear information processing architectures better suited for handling graph processes (i.e. sequences of graph signals) \cite{Perraudin17-Stationarity, Marques17-Stationarity, Gama19-GLLN} since they exploit both the graph and time structure of data \cite{Seo18-GCRN, Ruiz20-GRNN}. A GRNN learns to extract information from the sequence $\{\bbX(t)\}$ in the form of a sequence of \emph{hidden states} $\{\bbZ(t)\}$. Each hidden state is a graph signal $\bbZ(t) \in \reals^{N \times H}$ and is learned from the input process by using a nonlinear map that takes the current data point $\bbX(t)$ and the previous hidden state $\bbZ(t-1)$ as inputs, and outputs the updated hidden state $\bbZ(t)$. This map is parametrized  as
% eq:GRNNhidden
\begin{equation} \label{eq:GRNNhidden}
    \bbZ(t) = \sigma \Big( \sfA(\bbX(t); \bbS, \ccalA) + \sfB(\bbZ(t-1); \bbS, \ccalB \Big)
\end{equation}
where $\sfA: \reals^{N \times F} \to \reals^{N \times H}$ and $\sfB: \reals^{N \times H} \to \reals^{N \times H}$ are graph filters \eqref{eq:graphFilter} characterized by the set of filter taps $\ccalA = \{\bbA_{k} \in \reals^{F \times H}, k = 0,\ldots,K\}$ and $\ccalB = \{\bbB_{k} \in \reals^{H \times H}, k = 0,\ldots,K\}$, respectively. The function $\sigma:\reals \to \reals$ is a pointwise nonlinearity that is applied elementwise to the graph signal that results from adding up the two filter outputs.

A controller can be learned from the hidden state by means of another nonlinear map
% eq:GRNNout
\begin{equation} \label{eq:GRNNout}
    \bbU(t) = \rho \Big( \sfC(\bbZ(t); \bbS, \ccalC)\Big)
\end{equation}
where $\sfC: \reals^{N \times H} \to \reals^{N \times G}$ is a graph filter \eqref{eq:graphFilter} characterized by the filter taps $\ccalC = \{\bbC_{k} \in \reals^{H \times G}, k= 0,\ldots,K_{o}\}$, and where $\rho:\reals \to \reals$ is a pointwise nonlinearity that is applied to each entry of the output of the filter. In the case of a GRNN \eqref{eq:GRNNhidden}-\eqref{eq:GRNNout}, the design hyperparameters are the size of the hidden state $H$, the order of the filters $K$ in \eqref{eq:GRNNhidden} and $K_{o}$ in \eqref{eq:GRNNout}, and the choice of nonlinearities $\sigma$ and $\rho$. %\red{We also denote the output as $\sfPhi(\bbX ; \bbS, \ccalH)$.}

GRNNs \eqref{eq:GRNNhidden}-\eqref{eq:GRNNout} are local and distributed architectures, since they are the composition of graph filters \eqref{eq:graphFilter}, that exhibit these properties, and pointwise nonlinearities, that do not alter them. Oftentimes, we choose $K_{o} = 0$ so that the output controller \eqref{eq:GRNNout} is obtained by combining the hidden state values at each node, saving on communication cost and avoiding delays [cf. \eqref{eq:graphFilter}]; in other words, the communication with one-hop neighbors is carried out by \eqref{eq:GRNNhidden} and used to learn an appropriate hidden state. We also note that, potentially, we can also leverage the hidden state to learn specific messages to transmit that could be different from the signals $\bbX(t)$. In any case, GRNNs are also suitable choices for learning nonlinear decentralized controllers. We note that the number of parameters in $\ccalA$, $\ccalB$, and $\ccalC$ is independent of both $N$ the size of the graph and the length of the sequence. Thus, GRNNs can seamlessly adapt to arbitrarily long sequences, albeit they might require gating mechanisms to improve training \cite{Ruiz20-GRNN}.

The consideration of GRNN architectures leads to an increased representational power by keeping track of the evolution of the system through hidden state [cf. \eqref{eq:GRNNhidden}]. In this way, GRNNs decouple the mapping from the input to the output into one from the input to the hidden state and another one from the hidden state to the output. This provides a more powerful framework capable of learning a wider variety of representations. Potentially, this hidden state can be used to map, not only the output, but also the specific messages to transmit. That is, GRNNs can be used to learn specific messages to communicate to neighbors based on the current task. Furthermore, the use of gating that is typical to GRNNs can be employed to determine when and where to communicate between agents, potentially saving on communication costs.

%%%%%%%%%%%%%%%%%%%%%%%%%%%%%%%%%%%%%%%%%%%%%%%%%%%%%%%%%%%%%%%%%%%%%%%%%%%%%%%%
%%%%                      SUBSECTION : Imitation learning                   %%%%
%%%%%%%%%%%%%%%%%%%%%%%%%%%%%%%%%%%%%%%%%%%%%%%%%%%%%%%%%%%%%%%%%%%%%%%%%%%%%%%%
%%%% subsec:imitation
%%%%%%%%%%%%

\subsection{Imitation learning} \label{subsec:imitation}

Learning a decentralized controller based on a graph filter \eqref{eq:graphFilter}, a GCNN \eqref{eq:GCNN}-\eqref{eq:GCNNout} or a GRNN \eqref{eq:GRNNhidden}-\eqref{eq:GRNNout} entails a training procedure to find appropriate parameters. This could mean the graph filter taps $\ccalH$, the GCNN filter tensor $\ccalH$, or the GRNN filter taps $\ccalA$, $\ccalB$ and $\ccalC$. For simplicity of exposition we describe training of a GCNN. Training graph filters or GRNNs are ready extensions.

We propose the use of imitation learning \cite{Ross10-ImitationLearning, Ross11-DAGger, Bagnell17-Aggrevated, Hussein17-ImitationLearning, Osa18-ImitationLearning} whereby the filter tensor $\ccalH$ is chosen as the one that can most closely imitate the expert controller $\Pi^*_{\text{c}}$ defined in \eqref{eq:optimalDecentralizedController}. Formally, we seek a filter tensor for which the norm of the difference between the GCNN actions $\sfPhi(\bbX; \bbS, \ccalH)$ and the expert controller actions $\Pi^*_{\text{c}}(\bbX)$ is minimized on expectation over the probability distribution of the state $\bbX$ under the expert policy,
% eq:imitationLearning
\begin{equation} \label{eq:imitationLearning}
   \ccalH^{\ast}
      = \argmin_{\ccalH}\mbE
           \bigg[ \,
              \Big \| \,
                 \sfPhi\big(\bbX; \bbS, \ccalH\big)
                    -  \Pi^*_{\text{c}}\big(\bbX \big) \,
                       \Big \| \,
                          \bigg]
\end{equation}
where $\| \cdot \|$ stands for the norm of a graph signal $\bbX \in \reals^{N \times F}$ given by $\|\bbX\| = \sum_{f=1}^{F} \| \bbx_{f}\|_{2}$ \cite{Gama22-DistributedLQR} (for $\|\cdot\|_{2}$ the Euclidean norm of vectors), which is equivalent to the $L_{2,1}$ matrix norm.
Observe that in \eqref{eq:imitationLearning} the probability distribution of the state $\bbX$ depends on the control policy. We assume that state realizations are drawn from a system that is controlled by $\Pi^*_{\text{c}}$. In practice, we simulate realizations of the system controlled by $\Pi^*_{\text{c}}$ to generate system trajectories $\bbX(t)$. This creates a training set $\ccalT$ of trajectories that we use to approximate the expected cost in \eqref{eq:imitationLearning},
% eq:imitationLearning_sum
\begin{equation} \label{eq:imitationLearning_sum}
   \hat{\ccalH}^{\ast}
      = \argmin_{\ccalH} \sum_{\bbX(t)\in\ccalT}
           \bigg[ \,
              \Big \| \,
                 \sfPhi\big(\bbX(t); \bbS, \ccalH\big)
                    -  \Pi^*_{\text{c}}\big(\bbX(t) \big) \,
                       \Big \| \,
                          \bigg].
\end{equation}
The tensors in \eqref{eq:imitationLearning} and \eqref{eq:imitationLearning_sum} are close for a sufficiently large training set. We note that the optimization problem \eqref{eq:imitationLearning_sum} is nonconvex, and thus the use of any algorithm based on stochastic gradient descent (SGD) is not guaranteed to converge to the global optimum. In any case, as is common practice in machine learning, the objective of using SGD on \eqref{eq:imitationLearning_sum} is not to find a global optimum, but to find suitable parameters $\ccalH$ that would lead to proper generalization to the test set~\cite[Ch. 8]{Goodfellow16-DeepLearning} --a feat achieved as evidenced in Secs.~\ref{sec:flocking}~and~\ref{sec:pathPlanning}.

Imitation learning is a general framework that can be used beyond GNNs. The main concept is that one has access to some expert controller that is readily available, but with undesirable characteristics. One can then design a family of controllers with desirable characteristics and leverage imitation learning to choose the best controller among that family. In this particular case, the undesirable characteristics of the expert controller are either its centrality or its computationally intractability (i.e. many decentralized controllers are NP-hard and thus do not scale for large teams \cite{Prorok21-HolyGrail}), while the desirable characteristics of GNNs are its computational tractability (decentralized nature and easy to compute) and, by the permutation invariance property (Sec.~\ref{sec:permutation}), transferability and scalability.

Implementation of \eqref{eq:imitationLearning_sum} requires access to the optimal expert controller $\Pi^*_{\text{c}}$. We emphasize that this is needed during \emph{offline training} but is not needed during \emph{online execution.} At execution time, agents observe their states and exchange messages with neighbors to evaluate the outputs of the graph filters that are defined by the filter tensor $\ccalH$. We say that the GCNN has been \emph{transferred} from the \emph{offline} system to the \emph{online} system. For this transference to work well, offline and online systems must be similar. In the following section, we show that GNNs have perfect transference across systems that are permutations of each other. As we will see, this is a simple yet necessary property for successful use of imitation policies.

% \red{Second, centralized (sub-)optimal controllers are often available \cite{Tanner04-Flocking, Sharon15-CBS}, albeit they rarely scale to large teams.} \red{WE HAVE SAID THIS ALREADY. OK TO REMOVE?}

% \red{Thus, choosing parametrizations that are able to scale and transfer is of paramount importance to be able to train the controllers in a relatively small graph, and then successfully employ it on larger teams. Thus, the computational cost of obtaining a centralized controller can be managed by considering small teams at training time.} \red{THIS COMMENT DOESN'T FIT HERE. OK TO REMOVE? CHECK WHAT I WROTE AT TE BEGGINING OF SECTION VI. IT IS INTENDED TO COVER THIS (AMONG OTHER IMPORTNAT PONTS).}

%%%%%%%%%%%%%%%%%%%%%%%%%%%%%%%%%%%%%%%%%%%%%%%%%%%%%%%%%%%%%%%%%%%%%%%%%%%%%%%%
%%%   R   E   M   A   R   K   %%%%%%%%%%%%%%%%%%%%%%%%%%%%%%%%%%%%%%%%%%%%%%%%%%
%%%%%%%%%%%%%%%%%%%%%%%%%%%%%%%%%%%%%%%%%%%%%%%%%%%%%%%%%%%%%%%%%%%%%%%%%%%%%%%%
%
\begin{remark}[Closed Loop Stability]
While imitating a closed-loop stable expert controller by means of \eqref{eq:imitationLearning_sum} does not guarantee that the resulting decentralized controller is stable, experimental results in Secs. \ref{sec:flocking} and \ref{sec:pathPlanning} show that they can be.
%This is most likely due to the fact that they imitate expert controllers that are themselves stable.
Theoretical closed loop stability guarantees are beyond the scope of this paper but some stability guarantees in related settings are known. We refer the reader to \cite{Matni21-StabilityIL, Arcak2021-ILstability} for results on stability, generalizability and robustness of policies trained by imitation learning. For stability guarantees of GNN-based controllers we refer the reader to \cite{Gama21-LQRGNN, Gama22-DistributedLQR}, which considers the particular case of linear dynamics and quadratic costs.
\end{remark}

%%%%%%%%%%%%%%%%%%%%%%%%%%%%%%%%%%%%%%%%%%%%%%%%%%%%%%%%%%%%%%%%%%%%%%%%%%%%%%%%
%%%   R   E   M   A   R   K   %%%%%%%%%%%%%%%%%%%%%%%%%%%%%%%%%%%%%%%%%%%%%%%%%%
%%%%%%%%%%%%%%%%%%%%%%%%%%%%%%%%%%%%%%%%%%%%%%%%%%%%%%%%%%%%%%%%%%%%%%%%%%%%%%%%
%
\begin{remark}[Decentralized Training]
We emphasize that in \eqref{eq:imitationLearning_sum} the training procedure is assumed to be centralized. This is not necessarily a drawback because we train offline prior to deployment of the learned policy in a decentralized system. Decentralized training is of interest to adapt policies to system realizations observed during online execution. We point out, however, that decentralized training is incompatible with imitation of a centralized expert policy. A possible approach is to combine a policy that is trained offline through centralized imitation learning with a policy that is trained online with decentralized optimization; see \cite{Gao20-WideDeep}.
\end{remark}

%%%%%%%%%%%%%%%%%%%%%%%%%%%%%%%%%%%%%%%%%%%%%%%%%%%%%%%%%%%%%%%%%%%%%%%%%%%%%%%%
%%%   R   E   M   A   R   K   %%%%%%%%%%%%%%%%%%%%%%%%%%%%%%%%%%%%%%%%%%%%%%%%%%
%%%%%%%%%%%%%%%%%%%%%%%%%%%%%%%%%%%%%%%%%%%%%%%%%%%%%%%%%%%%%%%%%%%%%%%%%%%%%%%%
%
\begin{remark}[Reinforcement Learning]
Instead of imitation learning, the use of reinforcement learning (RL) \cite{Sutton18-RL} is also possible \cite{Khan19-GraphPolicyGradients}. Relative to RL, imitation learning is more sample efficient \cite{Ross10-ImitationLearning, Ross11-DAGger, Bagnell17-Aggrevated} and has been, for that reason, extensively used in situations where an expert controller is available \cite{Hussein17-ImitationLearning, Osa18-ImitationLearning, Hertneck17-ImitationMPC}. The drawback of imitation learning is the need to have access to this expert controller. In our case, this entails access to a centralized controller which may require the use of heuristics (cf. Remark \ref{rmk_optimal_centralized_policy}). If available heuristics are deemed inadequate, an optimal centralized controller may itself imply the use of RL to solve \eqref{eq:generalObjective}. In this case, the use of RL to directly train a GNN is a better alternative \cite{Khan19-GraphPolicyGradients}.
\end{remark}

%%%%%%%%%%%%%%%%%%%%%%%%%%%%%%%%%%%%%%%%%%%%%%%%%%%%%%%%%%%%%%%%%%%%%%%%%%%%%%%%
%%%%                    SECTION : Permutation Invariance                    %%%%
%%%%%%%%%%%%%%%%%%%%%%%%%%%%%%%%%%%%%%%%%%%%%%%%%%%%%%%%%%%%%%%%%%%%%%%%%%%%%%%%
%%%% sec:permutation %%%%
%%%%%%%%%%%%%%%%%%%%%%%%%

\section{Permutation Invariance and Transferability} \label{sec:permutation}

%!TEX root = 00-controlGNN.tex

Implicit in the use of imitation learning is the assumption that the system realizations that are observed during \emph{offline training} are similar to the system realizations that are observed during \emph{online execution.} A significant obstacle to satisfy this requirement are agent labels. Changing agent labels does not change the system, but it changes the description of the state. Since control actions produced by a GNN are functions of the state, it could be that we execute different actions for the same system. In this sense, the permutation equivariance of GNNs \cite[Prop.~1]{Gama20-Stability} implies that a reordering of the input will result in the same reordering of the output. To obtain a scalable controller, we actually need guarantees that the problem to solve is permutation invariant. In this section, we prove that the imitation learning leads to a permutation invariant problem when the input is properly chosen.

Begin by defining a permutation matrix of size $N$ as a binary matrix with exactly one nonzero entry in each row and column,
% eq:permutationSet
\begin{equation} \label{eq:permutationSet}
   \bbP \in \{0,1\}^{N \times N},
   \text{\quad with~}
   \bbP \bbone = \bbone,~
   \bbP^{\Tr} \bbone = \bbone.
\end{equation}
Premultiplication of a vector by the permutation matrix $\bbP^{\Tr}$ reorders its entries. Consequently, the product $\tbX=\bbP^{\Tr}\bbX$ is a permutation of the system's state and the product $\tbU=\bbP^{\Tr}\bbU$ is a permutation of the system's action. Given that permutation matrices are orthogonal, this change of variables does not change the transition probability kernel. We therefore have
% eqn_dynamic_model_invariance
\begin{equation} \label{eqn_dynamic_model_invariance}
    \mbP \Big( \tbX \given \tbX(t), \tbU(t) \Big)
    ~=~
    \mbP \Big( \bbX \given \bbX(t), \bbU(t) \Big).
\end{equation}
It follows from \eqref{eqn_dynamic_model_invariance} that the evolution of the system is the same irrespectively of how entries are labeled -- as it should be. To attain equivalent systems we have to further assume that costs are invariant to permutations as we detail next.

%%%%%%%%%%%%%%%%%%%%%%%%%%%%%%%%%%%%%%%%%%%%%%%%%%%%%%%%%%%%%%%%%%%%%%%%%%%%%%%%
%%%   A   S   S   U   M   P   T   I   O   N   %%%%%%%%%%%%%%%%%%%%%%%%%%%%%%%%%%
%%%%%%%%%%%%%%%%%%%%%%%%%%%%%%%%%%%%%%%%%%%%%%%%%%%%%%%%%%%%%%%%%%%%%%%%%%%%%%%%

\begin{definition}[Permutation Invariant Systems]\label{def_sys_invariance} Consider a system with state-action pairs $(\bbX,\bbU)$ along with a permutation $(\tbX,\tbU) = (\bbP^{\Tr}\bbX, \bbP^{\Tr}\bbU)$. The system is said to be permutation invariant if for all $(\bbX,\bbU)$ we have
\begin{equation} \label{eq:sysInvariance}
   c(\bbX, \bbU) = c(\tbX, \tbU).
\end{equation}\end{definition}

For systems that satisfy Definition \ref{def_sys_invariance} we have that a description with state-action pairs $(\tbX,\tbU)$ is, save for a change of labels, equivalent to a system described by state-action pairs $(\bbX,\bbU)$. Ensuring permutation invariance of a system in the sense of Definition \ref{def_sys_invariance} is not difficult but requires care; see Remarks \ref{rmk_sys_invariance_no} and \ref{rmk_sys_invariance_yes}.

Observe now that the product $\tbS= \bbP^T\bbS\bbP$ is a consistent permutation of the rows and columns of the shift operator $\bbS$. As it follows from \eqref{eq:imitationLearning}, a trained GNN is an optimal filter tensor $\ccalH^{*}$. It is then feasible to train the GNN \emph{offline} on a system with state realizations $\bbX$ and shift operator $\bbS$ and execute it \emph{online} in a system with state realizations $\tbX$ and shift operator $\tbS$. This results in actions that are chosen as per
% eqn_transfered_gnn
\begin{equation} \label{eqn_transfered_gnn}
    \tbU(t) = \sfPhi\Big(\tbX(t) ; \tbS, \ccalH^{*}\Big)
\end{equation}
We say that the GNN is \emph{transferred.} Alternatively, we can consider the permuted system with state realizations $\tbX$ and shift operator $\tbS$ along with a training process in which we choose the filter tensor $\tilde{\ccalH}^{\ast}$ that is optimal in imitating the centralized policy $\Pi^*_{\text{c}}\big(\tbX(t)\big)$ of the permuted system [cf. \eqref{eq:imitationLearning}],
% eq:imitationLearning
\begin{equation} \label{eq:imitationLearning_tilde}
   \tilde{\ccalH}^{\ast}
      = \argmin_{\ccalH}\mbE
           \bigg[ \,
              \Big \| \,
                 \sfPhi\big(\tbX(t); \tbS, \ccalH\big)
                    -  \Pi^*_{\text{c}}\big(\tbX(t)\big) \,
                       \Big \| \,
                          \bigg].
\end{equation}
With this tensor available we would choose actions $\tbU(t) = \sfPhi(\tbX(t) ; \tbS, \tilde{\ccalH}^{*})$. The following proposition shows that there is no advantage in doing so because the tensors $\tilde{\ccalH}^{*}$ and $\ccalH^{*}$ are the same.

%%%%%%%%%%%%%%%%%%%%%%%%%%%%%%%%%%%%%%%%%%%%%%%%%%%%%%%%%%%%%%%%%%%%%%%%%%%%%%%%
%%%   T   H   E   O   R   E   M   %%%%%%%%%%%%%%%%%%%%%%%%%%%%%%%%%%%%%%%%%%%%%%
%%%%%%%%%%%%%%%%%%%%%%%%%%%%%%%%%%%%%%%%%%%%%%%%%%%%%%%%%%%%%%%%%%%%%%%%%%%%%%%%
%
\begin{proposition}[Permutation Invariance] \label{thm:permutationInvariance}
Consider a system described by state-action pair $(\bbX,\bbU)$ and shift operator $\bbS$ along with a permuted system described by $(\tbX,\tbU)=(\bbP^{\Tr}\bbX,\bbP^{\Tr}\bbU)$ and $\tbS=\bbP^{\Tr}\bbS\bbP$. If the system is invariant in the sense of Definition \ref{def_sys_invariance}, the respective optimal imitation tensors [cf. \eqref{eq:imitationLearning} and \eqref{eq:imitationLearning_tilde}] are equivalent,
% eq:permutationInvariance
\begin{equation} \label{eq:permutationInvariance}
   \tilde{\ccalH}^{\ast} \equiv \ccalH^{\ast}.
\end{equation} \end{proposition}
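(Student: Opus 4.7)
\begin{myproof}
The plan is to reduce the optimization in \eqref{eq:imitationLearning_tilde} to the one in \eqref{eq:imitationLearning} by exploiting three permutation symmetries in sequence: equivariance of the GNN parametrization, equivariance of the centralized expert induced by Definition \ref{def_sys_invariance}, and invariance of the imitation loss and training distribution.

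First, I would establish that the GCNN map $\sfPhi(\cdot;\cdot,\ccalH)$ is permutation equivariant for every fixed tensor $\ccalH$. Because $\tbS^{k} = (\bbP^{\Tr}\bbS\bbP)^{k} = \bbP^{\Tr}\bbS^{k}\bbP$ and $\bbP\bbP^{\Tr}=\bbI$, each graph filter from \eqref{eq:graphFilter} satisfies
\begin{equation*}
\sfH(\bbP^{\Tr}\bbX;\tbS,\ccalH)
= \sum_{k=0}^{K}\bbP^{\Tr}\bbS^{k}\bbP\bbP^{\Tr}\bbX\bbH_{k}
= \bbP^{\Tr}\sfH(\bbX;\bbS,\ccalH).
\end{equation*}
Pointwise nonlinearities commute with row permutations, so iterating through \eqref{eq:GCNN}--\eqref{eq:GCNNout} yields $\sfPhi(\tbX;\tbS,\ccalH)=\bbP^{\Tr}\sfPhi(\bbX;\bbS,\ccalH)$ for all $\ccalH$; the identical argument works for GRNNs via \eqref{eq:GRNNhidden}--\eqref{eq:GRNNout}.

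Second, I would argue that the expert policy inherits the same equivariance, i.e.\ $\Pi^{*}_{\text{c}}(\tbX)=\bbP^{\Tr}\Pi^{*}_{\text{c}}(\bbX)$. Combining the Markov-kernel identity \eqref{eqn_dynamic_model_invariance} with the permutation invariance of the cost \eqref{eq:sysInvariance}, one checks that the infinite-horizon discounted objective $J$ in \eqref{eq:generalObjective} satisfies $J(\Pi_{\text{c}}\circ\bbP^{\Tr})=J(\Pi_{\text{c}})$ up to a consistent relabeling of actions. Hence, if $\Pi^{*}_{\text{c}}$ minimizes $J$, so does the relabeled policy $\bbP^{\Tr}\Pi^{*}_{\text{c}}(\bbP\cdot)$. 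If the optimum is unique the two policies coincide; otherwise I would adopt the standard convention of selecting the symmetric minimizer, which is well defined by averaging or by any permutation-equivariant tie-breaking rule. This is the step I expect to require the most care, because it depends on having sufficient regularity to pick the equivariant representative of $\Pi^{*}_{\text{c}}$.

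Third, I would exploit the row-permutation invariance of the mixed norm used in \eqref{eq:imitationLearning}. Because $\|\bbY\|=\sum_{f}\|\bby_{f}\|_{2}$ is a sum over column norms and $\bbP^{\Tr}$ permutes rows within each column while preserving Euclidean length, $\|\bbP^{\Tr}\bbY\|=\|\bbY\|$. Substituting the two equivariance relations into the objective of \eqref{eq:imitationLearning_tilde} gives
\begin{equation*}
\bigl\|\sfPhi(\tbX(t);\tbS,\ccalH)-\Pi^{*}_{\text{c}}(\tbX(t))\bigr\|
= \bigl\|\bbP^{\Tr}\bigl[\sfPhi(\bbX(t);\bbS,\ccalH)-\Pi^{*}_{\text{c}}(\bbX(t))\bigr]\bigr\|
= \bigl\|\sfPhi(\bbX(t);\bbS,\ccalH)-\Pi^{*}_{\text{c}}(\bbX(t))\bigr\|.
\end{equation*}

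Finally, I would close the argument by matching the two sampling distributions. Because \eqref{eqn_dynamic_model_invariance} implies that the trajectory of the permuted closed-loop system is exactly $\bbP^{\Tr}\bbX(t)$ when the original trajectory is $\bbX(t)$, the expectation over $\tbX(t)$ induced by the expert in the permuted system equals the expectation over $\bbP^{\Tr}\bbX(t)$ in the original system. Combining this with the pointwise equality above makes the objectives in \eqref{eq:imitationLearning} and \eqref{eq:imitationLearning_tilde} identical as functions of $\ccalH$, so their minimizing sets coincide and $\tilde{\ccalH}^{*}\equiv\ccalH^{*}$, establishing \eqref{eq:permutationInvariance}.
\end{myproof}
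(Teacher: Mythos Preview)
Your proposal is correct and follows essentially the same route as the paper: show that the GNN is permutation equivariant, that the centralized expert is permutation equivariant as a consequence of Definition~\ref{def_sys_invariance}, and that the imitation norm is permutation invariant, so the two objectives coincide and hence share minimizers. The paper simply cites the equivariance of $\sfPhi$ from prior work and asserts $\Pi_{\text{c}}^{\ast}(\tbX)=\bbP^{\Tr}\Pi_{\text{c}}^{\ast}(\bbX)$ directly from Definition~\ref{def_sys_invariance}, whereas you spell out the filter computation, the uniqueness/tie-breaking caveat for the expert, and the matching of the sampling distributions---all points the paper leaves implicit.
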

%
%%%%%%%%%%%%%%%%%%%%%%%%%%%%%%%%%%%%%%%%%%%%%%%%%%%%%%%%%%%%%%%%%%%%%%%%%%%%%%%%
%%%   P   R   O   O   F   %%%%%%%%%%%%%%%%%%%%%%%%%%%%%%%%%%%%%%%%%%%%%%%%%%%%%%
%%%%%%%%%%%%%%%%%%%%%%%%%%%%%%%%%%%%%%%%%%%%%%%%%%%%%%%%%%%%%%%%%%%%%%%%%%%%%%%%
%
\begin{proof} See Appendix~\ref{sec:appendix}. \end{proof}

%%%%%%%%%%%%%%%%%%%%%%%%%%%%%%%%%%%%%%%%%%%%%%%%%%%%%%%%%%%%%%%%%%%%%%%%%%%%%%%%
%%%   M   A   I   N       M   A   T   T   E   R   %%%%%%%%%%%%%%%%%%%%%%%%%%%%%%
%%%%%%%%%%%%%%%%%%%%%%%%%%%%%%%%%%%%%%%%%%%%%%%%%%%%%%%%%%%%%%%%%%%%%%%%%%%%%%%%
%
Proposition \ref{thm:permutationInvariance} proves that if the filter taps are optimal for a certain network, then they are optimal for all of its permutations. The result is a direct consequence of the permutation equivariance of graph filters \cite[Prop.~1]{Gama20-Stability}, which implies the permutation equivariance of GCNNs \cite[Prop.~2]{Gama20-Stability} and GRNNs \cite[Prop.~1]{Ruiz20-GRNN}.

Proposition \ref{thm:permutationInvariance} is a fundamental enabler of imitation learning because the theorem holds without having access to the permutation matrix $\bbP$. In \eqref{eqn_transfered_gnn} we just execute the GNN that we trained prior to permutation in the permuted shift operator and permuted states. Proposition \ref{thm:permutationInvariance} claims that this is an optimal strategy. Indeed, if we had the clairvoyance of knowing the configuration at execution time we would still find that the same filter tensor is optimal. An important observation to make is that it is unlikely that we will encounter exact permutations at execution and training time. We are, instead, likely to find systems that are close to permutations of systems that we encounter during training. In this scenario, Proposition \ref{thm:permutationInvariance} entails that GNNs are likely to have similar optimal tensor and that there is therefore little loss in using the transferred tensor $\ccalH^*$ instead of the optimal tensor $\tilde{\ccalH}^*$. This statements are formalized in papers that explore stability \cite{Gama20-Stability} and transferability \cite{Ruiz20-Transferability} analyses of GNNs -- see \cite{Gama20-GNNs, Ruiz21-GNNs} for a tutorial presentation.

%%%%%%%%%%%%%%%%%%%%%%%%%%%%%%%%%%%%%%%%%%%%%%%%%%%%%%%%%%%%%%%%%%%%%%%%%%%%%%%%
%%%   F   I   G   U   R   E   %%%%%%%%%%%%%%%%%%%%%%%%%%%%%%%%%%%%%%%%%%%%%%%%%%
%%%%%%%%%%%%%%%%%%%%%%%%%%%%%%%%%%%%%%%%%%%%%%%%%%%%%%%%%%%%%%%%%%%%%%%%%%%%%%%%
%
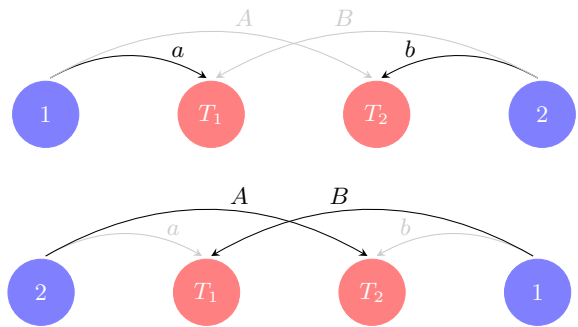
\begin{figure}
\centering
%!TEX root = ../00-controlGNN.tex
%

% Definition of a graph node without filled in color. 
% Controls the appearance of a node in a graph plot
\tikzstyle{empty node} = [ circle, 
                     draw = black,
                     text = black, 
                     minimum size = 0.8*\unit]

% Definition of a graph node. Controls the appearance of a node in a graph plot
\tikzstyle{node} = [ empty node, 
                     fill = blue!50,
                     draw = blue!50,
                     text = white]

\tikzstyle{blue node} = [ empty node, 
                         fill = blue!50,
                         draw = blue!50,
                         text = white]

\tikzstyle{red node} = [ empty node, 
                         fill = red!50,
                         draw = red!50,
                         text = white]

\def \myfactor {1.1}
\def \unit {\myfactor cm}

\def \deltanode {(2, 0)}

{\fontsize{9}{9}\selectfont 
\begin{tikzpicture}[scale = \myfactor, -stealth, shorten <= 2pt, shorten >=2pt]
  \node                         [blue node] (N1)  {$1$};
  \path (N1) + \deltanode node  [red node]  (T1)  {$T_1$};
  \path (T1) + \deltanode node  [red node]  (T2)  {$T_2$};
  \path (T2) + \deltanode node  [blue node] (N2)  {$2$};

%  \path (N1.north) edge [bend left]  node [above, pos = 0.8] {$\bbr_{11}$} (T1.north);
%  \path (N1.north) edge [bend left]  node [above, pos = 0.6] {$\bbr_{12}$} (T2.north);
%  \path (N2.north) edge [bend right] node [above, pos = 0.6] {$\bbr_{21}$} (T1.north);
%  \path (N2.north) edge [bend right] node [above, pos = 0.8] {$\bbr_{22}$} (T2.north);

  \path[black!99] (N1.north) edge [bend left]  node [above, pos = 0.8] {$a$} (T1.north);
  \path[black!99] (N2.north) edge [bend right] node [above, pos = 0.8] {$b$} (T2.north);
  \path[black!20] (N1.north) edge [bend left]  node [above, pos = 0.6] {$A$} (T2.north);
  \path[black!20] (N2.north) edge [bend right] node [above, pos = 0.6] {$B$} (T1.north);

\end{tikzpicture}

\bigskip

\begin{tikzpicture}[scale = \myfactor, -stealth, shorten <= 2pt, shorten >=2pt]
  \node                         [blue node] (N1)  {$2$};
  \path (N1) + \deltanode node  [red node]  (T1)  {$T_1$};
  \path (T1) + \deltanode node  [red node]  (T2)  {$T_2$};
  \path (T2) + \deltanode node  [blue node] (N2)  {$1$};

%  \path (N1.north) edge [bend left]  node [above, pos = 0.8] {$\bbr_{11}$} (T1.north);
%  \path (N1.north) edge [bend left]  node [above, pos = 0.6] {$\bbr_{12}$} (T2.north);
%  \path (N2.north) edge [bend right] node [above, pos = 0.6] {$\bbr_{21}$} (T1.north);
%  \path (N2.north) edge [bend right] node [above, pos = 0.8] {$\bbr_{22}$} (T2.north);

  \path[black!20] (N1.north) edge [bend left]  node [above, pos = 0.8] {$a$} (T1.north);
  \path[black!20] (N2.north) edge [bend right] node [above, pos = 0.8] {$b$} (T2.north);
  \path[black!99] (N1.north) edge [bend left]  node [above, pos = 0.6] {$A$} (T2.north);
  \path[black!99] (N2.north) edge [bend right] node [above, pos = 0.6] {$B$} (T1.north);

\end{tikzpicture}}
\caption{Permutation Equivariance in System Models. The use of GNNs is warranted in systems that are permutation invariant. Systems with this property are common, but not all decentralized control problems are permutation invariant. See Remarks \ref{rmk_sys_invariance_no} and \ref{rmk_sys_invariance_yes}.}
\label{fig_sys_invariance}
\end{figure}

%%%%%%%%%%%%%%%%%%%%%%%%%%%%%%%%%%%%%%%%%%%%%%%%%%%%%%%%%%%%%%%%%%%%%%%%%%%%%%%%
%%%   R   E   M   A   R   K   %%%%%%%%%%%%%%%%%%%%%%%%%%%%%%%%%%%%%%%%%%%%%%%%%%
%%%%%%%%%%%%%%%%%%%%%%%%%%%%%%%%%%%%%%%%%%%%%%%%%%%%%%%%%%%%%%%%%%%%%%%%%%%%%%%%
%
\begin{remark}[Permutation Variant System]\label{rmk_sys_invariance_no}
Constructing costs that satisfy Definition \ref{def_sys_invariance} requires care. Consider the system in Figure \ref{fig_sys_invariance} where we want to drive agent $1$ towards target $T_1$ and agent $2$ towards $T_2$. For a given system configuration the system's reward is the sum of the distances from agent $1$ to target $T_1$ and from agent 2 to target $T_2$. When we consider the configuration shown on the top the reward is $c(\bbX) = a+b$. The configuration shown at the bottom is a relabeling of the agents but the reward that is collected is $c(\tbX) = A+B$ which is in general different. This \emph{labeled} motion planing problem is not amenable to solutions based on GNNs.
\end{remark}

%%%%%%%%%%%%%%%%%%%%%%%%%%%%%%%%%%%%%%%%%%%%%%%%%%%%%%%%%%%%%%%%%%%%%%%%%%%%%%%%
%%%   R   E   M   A   R   K   %%%%%%%%%%%%%%%%%%%%%%%%%%%%%%%%%%%%%%%%%%%%%%%%%%
%%%%%%%%%%%%%%%%%%%%%%%%%%%%%%%%%%%%%%%%%%%%%%%%%%%%%%%%%%%%%%%%%%%%%%%%%%%%%%%%
%
\begin{remark}[Permutation Invariant System]\label{rmk_sys_invariance_yes}
An example of a problem that satisfies Definition \ref{def_sys_invariance} is when we want to drive the agents to \emph{either} target. We can do that by choosing as reward the distance to the nearest target. When we do that the reward for the top configuration is $c(\tbX) = b+a$ which is the same. This \emph{unlabeled} motion planing problem is amenable to solutions based on GNNs. See Sec. \ref{sec:pathPlanning} for an example of how proper craft of state and rewards is required to yield a system that is amenable to the use of GNNs.

\end{remark}

%%%%%%%%%%%%%%%%%%%%%%%%%%%%%%%%%%%%%%%%%%%%%%%%%%%%%%%%%%%%%%%%%%%%%%%%%%%%%%%%
%%%%                     SECTION : Time-Varying Graphs                      %%%%
%%%%%%%%%%%%%%%%%%%%%%%%%%%%%%%%%%%%%%%%%%%%%%%%%%%%%%%%%%%%%%%%%%%%%%%%%%%%%%%%
%%%% sec:timeVarying %%%%
%%%%%%%%%%%%%%%%%%%%%%%%%

\section{Time-Varying Graphs} \label{sec:timeVarying}

%!TEX root = 00-controlGNN.tex

%%%%%%%%%%%%%%%%%%%%%%%%%%%%%%%%%%%%%%%%%%%%%%%%%%%%%%%%%%%%%%%%%%%%%%%%%%%%%%%%
%%%%%%%%%%%%%%%% CORRESPONDING TO SECTION sec:flocking (Below) %%%%%%%%%%%%%%%%%

%%%%%%%%%%%%%%%%%%%%%%%%%%%%%%%%%%%%%%%%
%%%%             TABLE              %%%% tab:flockingGCNN
%%%%%%%%%%%%%%%%%%%%%%%%%%%%%%%%%%%%%%%%
%%
{
    \begin{table*}[ht]
        \centering
        \caption{Average (std. deviation) normalized cost for different hyperparameters in the flocking problem. Optimal cost: $52(\pm 1)$.}
        \label{tab:flocking}
        \begin{subtable}{0.28\textwidth}
            \centering
            \begin{tabular}{c|ccc}
                $G$ $\backslash$ $K$ & $2$ & $3$ & $4$  \\ \hline
                $16$ & \cellpurple $10 (\pm 2)$ & \cellred $8 (\pm 2)$ & \cellred $8 (\pm 1)$ \\
                $32$ & \cellpurple $11 (\pm 3)$ & \cellred $8 (\pm 1)$ & \cellred $\mathbf{7 (\pm 1)}$ \\
                $64$ & \cellpurple $10 (\pm 2)$ & \cellred $8 (\pm 1)$ & \cellred $8 (\pm 1)$ \\
            \end{tabular}
            \caption{GF}
            \label{subtab:GF}
        \end{subtable}
        \begin{subtable}{0.34\textwidth}
            \centering
            \begin{tabular}{|ccc}
                $2$ & $3$ & $4$  \\ \hline
                \cellorange $3.4 (\pm 0.3)$ & \cellorange $3.2 (\pm 0.2)$ & \cellorange $3.1 (\pm 0.3)$ \\
                \cellblue $1.88 (\pm 0.07)$ & \cellblue $1.8 (\pm  0.1)$ & \cellblue $1.81 (\pm 0.07)$ \\
                \cellgreen $1.6 (\pm 0.2)$ & \cellgreen $\mathbf{1.60 (\pm  0.08)}$ & \cellgreen $1.63 (\pm 0.08)$ \\
            \end{tabular}
            \caption{GCNN}
            \label{subtab:GCNN}
        \end{subtable}
        \begin{subtable}{0.34\textwidth}
            \centering
            \begin{tabular}{|ccc}
                $2$ & $3$ & $4$  \\ \hline
                \cellorange $2.7 (\pm 0.2)$ & \cellorange $2.6 (\pm 0.1)$ & \cellorange $2.6 (\pm 0.1)$ \\
                \cellblue $1.60 (\pm 0.07)$ & \cellblue $1.58 (\pm  0.07)$ & \cellblue $1.58 (\pm 0.07)$ \\
                \cellgreen $1.48 (\pm 0.05)$ & \cellgreen $\mathbf{1.48 (\pm  0.05)}$ & \cellgreen $1.48 (\pm 0.06)$ \\
            \end{tabular}
            \caption{GRNN}
            \label{subtab:GRNN}
        \end{subtable}
    \end{table*}
}
%%
%%%% End of TABLE %%%%
%%%%%%%%%%%%%%%%%%%%%%

%%%%%%%%%%%%%%%% CORRESPONDING TO SECTION sec:flocking (Above) %%%%%%%%%%%%%%%%%
%%%%%%%%%%%%%%%%%%%%%%%%%%%%%%%%%%%%%%%%%%%%%%%%%%%%%%%%%%%%%%%%%%%%%%%%%%%%%%%%

%%%%%%%%%%%%%%%%%%%%%%%%%%%%%%%%%%%%%%%%%%%%%%%%%%%%%%%%%%%%%%%%%%%%%%%%%%%%%%%%
%%%%                                                                        %%%%
%%%%                     SECTION : Time-Varying Graphs                      %%%%
%%%%                                                                        %%%%
%%%%%%%%%%%%%%%%%%%%%%%%%%%%%%%%%%%%%%%%%%%%%%%%%%%%%%%%%%%%%%%%%%%%%%%%%%%%%%%%
%%%% sec:timeVarying %%%%
%%%%%%%%%%%%%%%%%%%%%%%%%

The parametrizations discussed in Sec.~\ref{sec:GNN}, while suitable for learning decentralized controllers, assume that the trajectory $\{\bbX(t)\}$ is defined always on the same graph described by $\bbS$. However, in many practical instances, the control actions cause a movement of the agents. Such a movement alters the relative location of the agents, and thus, alters the communication network that the agents establish. Changing the communication network means that the graph support changes and thus the graph filter \eqref{eq:graphFilter} may no longer accurately model the communication dynamics. While there are several different graph-time filter alternatives \cite{Isufi17-Random, Grassi18-TimeVertex, Isufi19-Forecasting, Gama19-Control}, in what follows we focus on \emph{unit-delay} graph filters, and we build the corresponding \emph{unit-delay} GCNNs and GRNNs from them.

Let $\{(\bbX(t), \bbS(t))\}$ be a \emph{trajectory} where now each point in the sequence is comprised of a graph signal $\bbX(t)$ and its corresponding support $\bbS(t)$. Consider that each time instant $t$ represents the \emph{exchange clock}. This means that every time a node exchanges information with its neighbors, one time instant passes, creating a \emph{delayed information structure}
% eqn:delayedInformation
\begin{equation} \label{eq:delayedInformation}
\ccalX_{i}(t) = \bigcup_{k=0}^{\infty} \Big\{ \bbx_{j}(t-k) \ , \ j  \in \ccalN_{i}^{k}(t)\Big\}
\end{equation}
where $\ccalN_{i}^{k}(t)$ is the set of nodes $k$ hops away from node $i$, delayed $k$ time instants, and defined recursively as $\ccalN_{i}^{k}(t) = \{j' \in \ccalN_{j}^{k-1}(t-1) \ , \ j \in \ccalN_{i}(t)\}$ with $\ccalN_{i}^{1}(t) = \ccalN_{i}(t)$ and $\ccalN_{i}^{0} = \{i\}$ [cf. \eqref{eq:neighborHistory}]. The collection $\ccalX(t) = \{\ccalX_{i}(t)\}_{i=1,\ldots,N}$ of the delayed information structure at all nodes is the \emph{delayed information history}. The delayed information structure \eqref{eq:delayedInformation} means that each node only has access to past information from its neighbors, and this information gets delayed by the number of hops that had to be traversed to reach such information.

The FIR graph filter can be adapted to respect the delayed information history as follows [cf. \eqref{eq:graphFilter}]
% eqn:delayedGraphConv
\begin{equation} \label{eq:delayedGraphFilter}
\sfH\big( \ccalX(t) \big) = \sum_{k=0}^{K} \bbS(t) \cdots \bbS(t-(k-1)) \bbX(t-k) \bbH_{k}
\end{equation}
where the set of filter taps $\ccalH = \{\bbH_{k} \ , \ k = 0,\ldots,K\}$ characterizes the operation. Note that the output of \eqref{eq:delayedGraphFilter} is also a graph process defined over the same support sequence $\{\bbS(t)\}$ as the input graph process $\{\bbX(t)\}$. The filter in \eqref{eq:delayedGraphFilter} is usually called a \emph{unit-delay graph filter}.

These delayed graph filters can be used to build a \emph{unit-delay GCNN} as follows [cf. \eqref{eq:GCNN}]
% eq:delayedGCNN
\begin{equation} \label{eq:delayedGCNN}
\sfPhi\big( \ccalX(t) \big) = \sigma \Big(\sfH\big( \ccalX(t) \big) \Big)
\end{equation}
with $\sigma$ a pointwise nonlinearity. The delayed GCNN is characterized by the same set of filter taps $\ccalH$ and its output is also a graph process, but it is a nonlinear map from the input.

Likewise, we can adapt the GRNN \eqref{eq:GRNNhidden}-\eqref{eq:GRNNout} to satisfy the delayed information history. The hidden state becomes
% eq:delayedGRNNhidden
\begin{equation} \label{eq:delayedGRNNhidden}
\bbZ(t) = \sigma\Big( \sfA \big( \ccalX (t) \big) + \sfB \big( \ccalZ(t-1) \big) \Big)
\end{equation}
where both $\sfA$ and $\sfB$ are unit-delay graph filters [cf. \eqref{eq:delayedGraphFilter}], but the second one acting on the delayed information history created by the hidden state sequence $\ccalZ(t) = \{\ccalZ_{i}(t)\}_{i=1,\ldots,N}$ with $\ccalZ_{i}(t) = \cup_{k=0}^{\infty} \{\ccalZ_{j}(t-k) \ , \ j \in \ccalN_{i}^{k}(t) \}$ [cf. \eqref{eq:delayedInformation}]. The output is now
% eq:delayedGRNNoutput
\begin{equation} \label{eq:delayedGRNNout}
\bbU(t) = \rho \Big( \sfC \big( \ccalZ(t) \big) \Big)
\end{equation}
with $\sfC$ a delayed graph convolution [cf. \eqref{eq:delayedGraphFilter}] as well. Note that since $\bbU(t)$ depends on $\ccalZ(t)$ which, in turn, depends on $\ccalX(t)$, then this means that the clock of the output is one time unit delayed with respect to the clock of the input.

In the context of imitation learning, we note that now the expert controller generates trajectories $\{\bbX(t),\bbS(t)\}$ that include both changes in the state as well as changes in the underlying graph (likely due to changes in position). It is evident that the trajectories generated by the expert controller and observed at training time during the imitation learning phase will be different from those observed at deployment. Therefore, it is of paramount importance that the GNN-based controller is capable of transferring to new graph supports. We note that, while it is hard to obtain tight bounds on how much $\bbS(t)$ changes from $\bbS(t-1)$, this is a reasonable assumption since all graphs in the sequence $\{\bbS(t)\}$ come from the same family (in the experiments considered here, these are planar graphs), and as such, their spectral characteristics are similar \cite{Ruiz20-Transferability}.

%%%%%%%%%%%%%%%%%%%%%%%%%%%%%%%%%%%%%%%%%%%%%%%%%%%%%%%%%%%%%%%%%%%%%%%%%%%%%%%%
%%%%                          SECTION : Flocking                            %%%%
%%%%%%%%%%%%%%%%%%%%%%%%%%%%%%%%%%%%%%%%%%%%%%%%%%%%%%%%%%%%%%%%%%%%%%%%%%%%%%%%
%%%% sec:flocking %%%%
%%%%%%%%%%%%%%%%%%%%%%

\section{Flocking} \label{sec:flocking}

%!TEX root = 00-controlGNN.tex

%%%%%%%%%%%%%%%%%%%%%%%%%%%%%%%%%%%%%%%%%%%%%%%%%%%%%%%%%%%%%%%%%%%%%%%%%%%%%%%%
%%%%                                                                        %%%%
%%%%                          SECTION : Flocking                            %%%%
%%%%                                                                        %%%%
%%%%%%%%%%%%%%%%%%%%%%%%%%%%%%%%%%%%%%%%%%%%%%%%%%%%%%%%%%%%%%%%%%%%%%%%%%%%%%%%
%%%% sec:flocking %%%%
%%%%%%%%%%%%%%%%%%%%%%

In the problem of \emph{flocking} agents are initially flying at random velocities. The goal is to have them all fly at the same velocity while avoiding collisions with each other. Flocking is a canonical problem in decentralized robotics \cite{Tanner03-Stable, Tanner04-Flocking, Tolstaya19-Flocking} that we use to illustrate the potential of GCNNs [cf. \eqref{eq:delayedGCNN}] and GRNNs [cf. \eqref{eq:delayedGRNNhidden}-\eqref{eq:delayedGRNNout}] in learning decentralized scalable controllers. The experiments illustrate three facts:

\begin{list}{}{
                 \setlength{\labelwidth}{0pt}
                 \setlength{\leftmargin}{11pt}
                 \setlength{\labelsep}{0pt}
                 \setlength{\itemsep}{5pt}
                 \setlength{\topsep}{5pt}
                 \setlength{\parskip}{0pt}
              }

    \item[] \textbf{Successful Imitation.} GCNNs and GRNNs are decentralized architectures relying on local information exchanges with neighboring agents. They nevertheless successfully imitate expert, centralized policies that rely on global information.
    \item[] \textbf{Transference.} GCNNs and GRNNs can be transferred from systems drawn at \emph{offline} training into systems observed during \emph{online} execution (cf. Proposition \ref{thm:permutationInvariance}). This is true even if offline and online systems are not exact permutations.
    \item[] \textbf{Transference at Scale.}  GCNNs and GRNNs can be transferred to systems with larger numbers of nodes. This allows leverage of centralized controllers that are computationally feasible for systems with small numbers of nodes only. GCNNs and GRNNs are trained offline on small systems and scaled online to larger systems.
\end{list}

\vspace{3pt}

\noindent We describe the system and training setup followed by experimental results.

%%%%%%%%%%%%%%%%%%%%%%%%%%%%%%%%%%%%%%%%
%%%%             FIGURE             %%%% fig:flocking
%%%%%%%%%%%%%%%%%%%%%%%%%%%%%%%%%%%%%%%%
%%
\begin{figure*}[ht]
    \hfill
    \begin{subfigure}{0.3\textwidth}
        \centering
        \includegraphics[width = 0.9\textwidth]{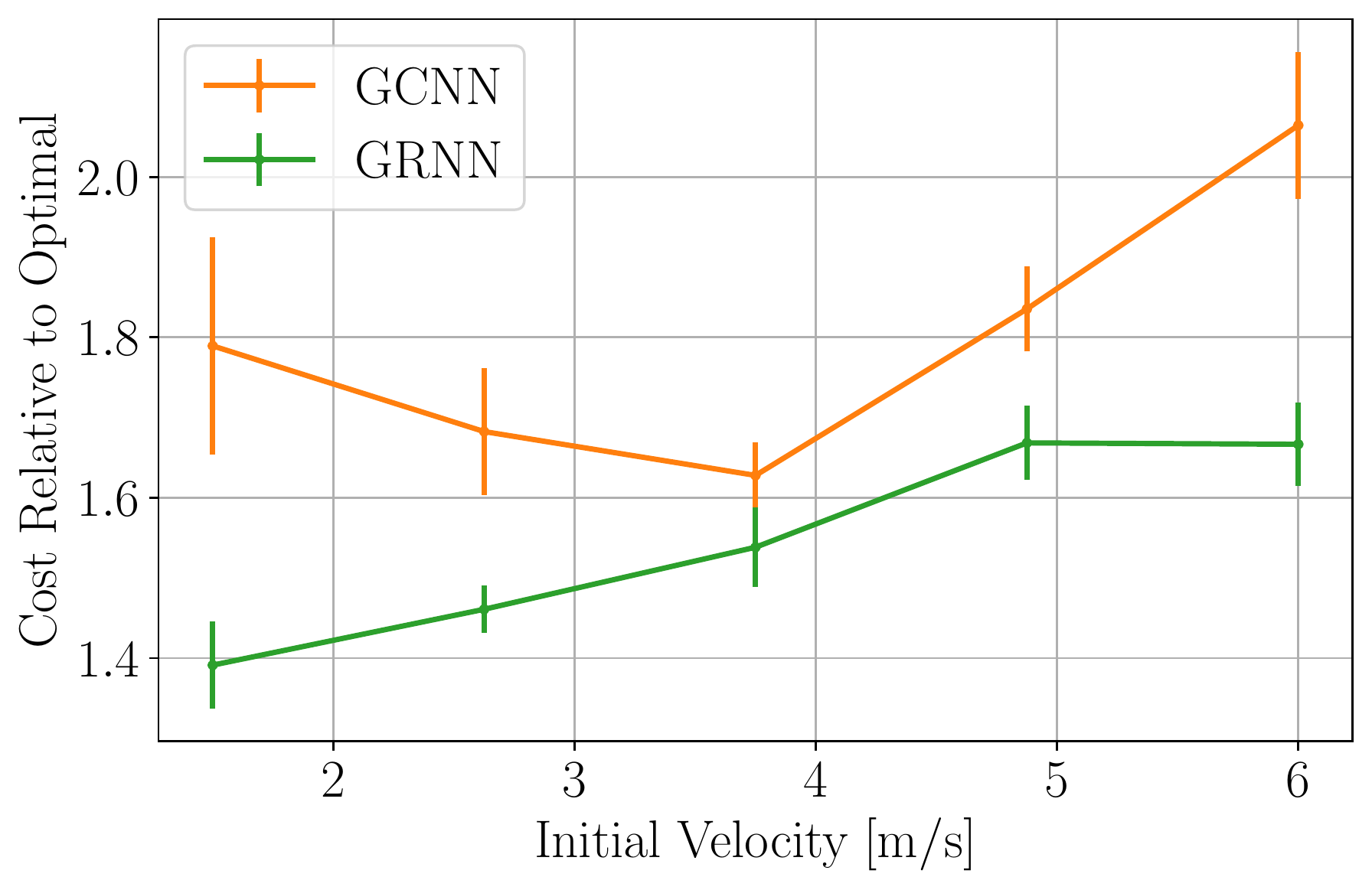}
        \caption{Initial velocity}
        \label{subfig:initVel}
    \end{subfigure}
    \hfill
    \begin{subfigure}{0.3\textwidth}
        \centering
        \includegraphics[width = 0.9\textwidth]{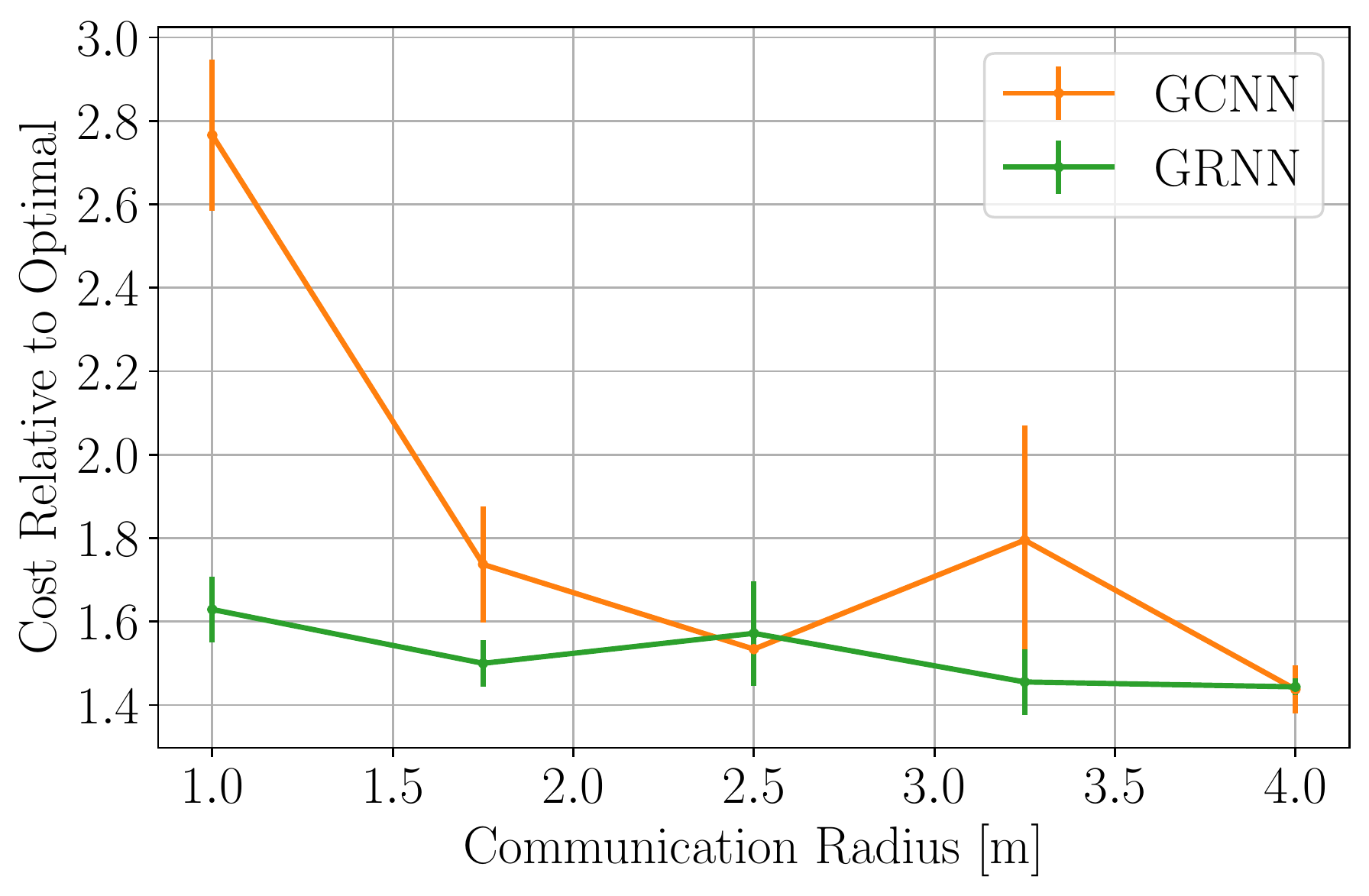}
        \caption{Communication radius}
        \label{subfig:initRadius}
    \end{subfigure}
    \hfill
    \begin{subfigure}{0.3\textwidth}
        \centering
        \includegraphics[width = 0.9\textwidth]{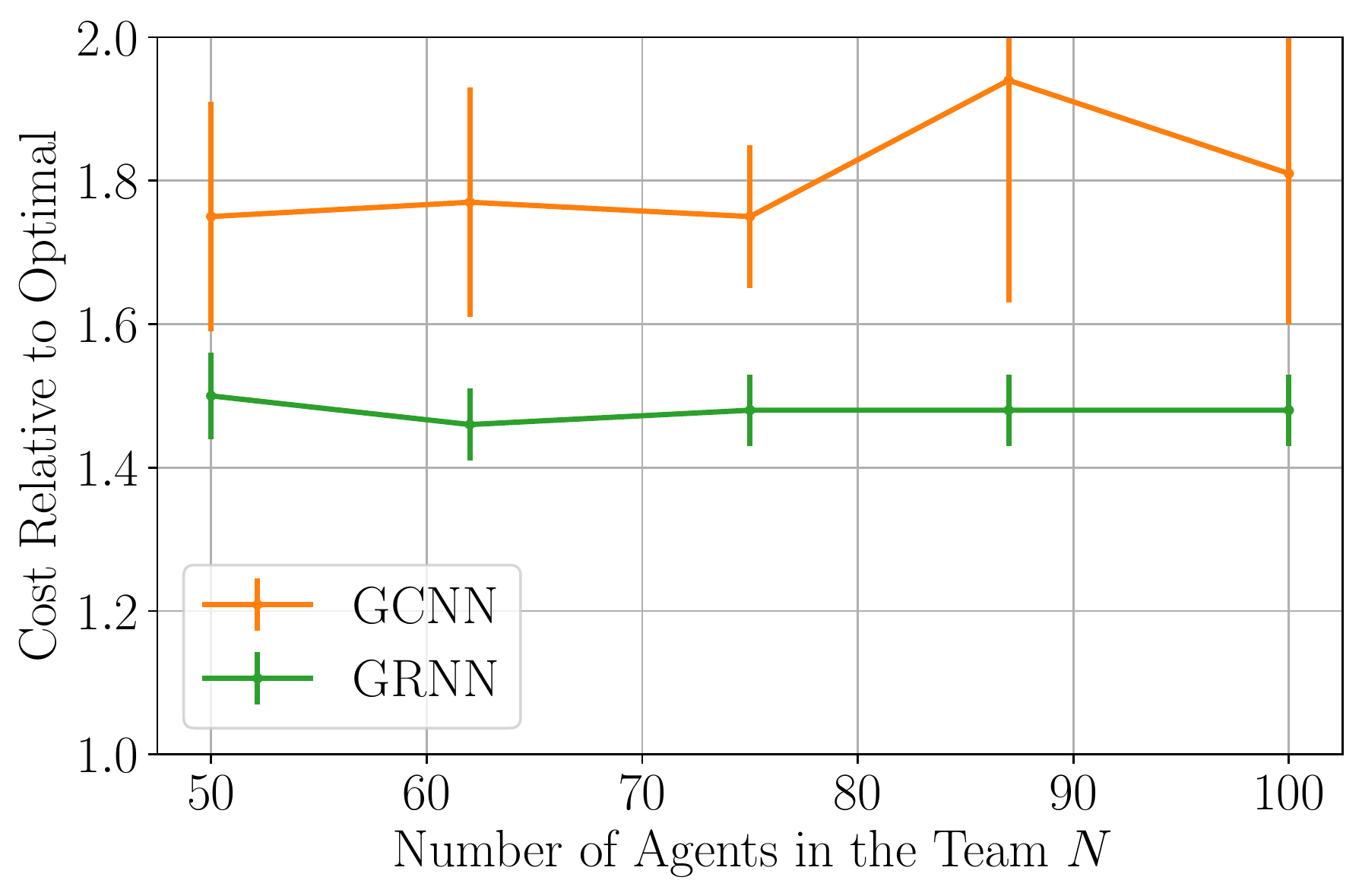}
        \caption{Tranferring at scale}
        \label{subfig:transferScale}
    \end{subfigure}
    \hfill
    \caption{Change in the cost, relative to the optimal cost, for different values of \subref{subfig:initVel} initial velocity, \subref{subfig:initRadius} communication radius, and \subref{subfig:transferScale} size of the team at test time (always trained on teams of size $N=50$). The relative values of the linear graph filter controller exceed $7(\pm 1)$, $8(\pm 1)$, and $8(\pm2)$, respectively, and thus are not shown.}
    \label{fig:flocking}
\end{figure*}
%%
%%%% End of FIGURE %%%%
%%%%%%%%%%%%%%%%%%%%%%%

\subsection{System Description and Training}

%%%%%%%%%%%%%%%%%%%%%%%%%%%%%%%%%%%%%%%%%%%%%%%%%%%%%%%%%%%%%%%%%%%%%%%%%%%%%%%%
%%%%                                Dynamics                                %%%%
%%%%%%%%%%%%%%%%%%%%%%%%%%%%%%%%%%%%%%%%%%%%%%%%%%%%%%%%%%%%%%%%%%%%%%%%%%%%%%%%

\myparagraph{Dynamics.} We consider $N$ agents, with $i \in \ccalV$ described by its position $\bbr_{i}(t) \in \reals^{2}$, its velocity $\bbv_{i}(t) \in \reals^{2}$ and its acceleration $\bbu_{i}(t) \in \reals^{2}$. The evolution of the system is
% eq:systemDynamicsFlocking
\begin{equation} \label{eq:systemDynamicsFlocking}
\begin{aligned}
    \bbr_{i}(t+1) &= \bbu_{i}(t) T_{\text{s}}^{2}/2 + \bbv_{i}(t) T_{\text{s}} + \bbr_{i}(t) \\
    \bbv_{i}(t+1) &= \bbu_{i}(t) T_{\text{s}} + \bbv_{i}(t)
\end{aligned}
\end{equation}
for $i=1,\ldots,N$. These dynamics imply that the acceleration $\bbu_{i}(t)$ is held constant for the duration of the sampling interval $[tT_{\text{s}},(t+1)T_{\text{s}})$ and that the agents can adjust it instantaneously between sampling intervals. This is a simple model that serves as a proof-of-concept experiment of the use of GNNs to learn decentralized controllers. More involved examples can be found in \cite{Tolstaya19-Flocking, TK20-Visual}. We set $T_{s} = 0.01\text{s}$ and $N=50$ agents. We note that these dynamics are used for simulation purposes, but they are considered unknown under the framework of imitation learning.

%%%%%%%%%%%%%%%%%%%%%%%%%%%%%%%%%%%%%%%%%%%%%%%%%%%%%%%%%%%%%%%%%%%%%%%%%%%%%%%%
%%%%                               Evaluation                               %%%%
%%%%%%%%%%%%%%%%%%%%%%%%%%%%%%%%%%%%%%%%%%%%%%%%%%%%%%%%%%%%%%%%%%%%%%%%%%%%%%%%

\myparagraph{Evaluation.} Formally, the objective of flocking is to determine the accelerations $\{\bbU(t)\}_{t}$ that make the velocities of all agents in the team be the same. We thus evaluate the performance of a given controller in terms of the velocity variation of the team
% eq:flockingObjective
\begin{equation} \label{eq:flockingObjective}
    c\big( \bbV(t) \big) =  \frac{1}{N} \sum_{i=1}^{N} \Big\| \bbv_{i}(t) - \frac{1}{N} \sum_{j=1}^{N} \bbv_{j}(t) \Big\|_{2}^{2}
\end{equation}
Essentially, the cost \eqref{eq:flockingObjective} is measuring how different the velocity of each individual agent $\bbv_{i}(t)$ is from the average velocity of the team $N^{-1} \sum_{j=1}^{N} \bbv_{j}(t)$. We note that $\sum_{t} c(\bbV(t))$ also measures how long it takes for the system to get controlled, while $c(\bbV(T))$ for some time horizon $T$ gives us an idea of how good the objective was achieved in the end. Note that this cost function satisfies \eqref{eq:sysInvariance}. In all cases, the cost is reported relative to the cost of the expert controller, which is introduced next.

%%%%%%%%%%%%%%%%%%%%%%%%%%%%%%%%%%%%%%%%%%%%%%%%%%%%%%%%%%%%%%%%%%%%%%%%%%%%%%%%
%%%%                            Centralized controller                      %%%%
%%%%%%%%%%%%%%%%%%%%%%%%%%%%%%%%%%%%%%%%%%%%%%%%%%%%%%%%%%%%%%%%%%%%%%%%%%%%%%%%

\myparagraph{Expert controller.} A \emph{centralized} controller that avoids collisions is given by \cite{Tanner03-Stable}
% eq:optimalActionFlocking
\begin{equation} \label{eq:optimalActionFlocking} \nonumber
    \bbu_{i}^{\ast}(t) = - \sum_{j=1}^{N} \Big( \bbv_{i}(t) - \bbv_{j}(t) \Big) - \sum_{j=1}^{N} \nabla_{\bbr_{i}} \sfC\sfA \Big( \bbr_{i}(t), \bbr_{j}(t)\Big)
\end{equation}
where $\nabla_{\bbr_{i}} \sfC\sfA ( \bbr_{i}(t), \bbr_{j}(t))$ is the gradient of the collision avoidance potential $\sfC\sfA:\reals^{2} \times \reals^{2} \to \reals$ with respect to the position of the agent $i$ and evaluated at its position $\bbr_{i}(t)$ and the position of every other agent $\bbr_{j}(t)$ at time $t$. The specific form of the collision avoidance potential $\sfC\sfA$ is given by
% eq:collisionAvoidance
\begin{equation} \label{eq:collisionAvoidance} \nonumber
    \sfC\sfA\Big( \bbr_{i}, \bbr_{j}\Big)  =
    \begin{cases}
        1/\|\bbr_{ij}\|_{2}^{2} - \log(\|\bbr_{ij}\|_{2}^{2}) & \text{if} \|\bbr_{ij} \|_{2} \leq R_{\text{CA}} \\
        1/R_{\text{CA}}^{2} - \log(R_{\text{CA}}^{2}) & \text{otherwise}
    \end{cases}
\end{equation}
with $\bbr_{ij} = \bbr_{i} - \bbr_{j}$ and $R_{\text{CA}} > 0$ indicating the minimum acceptable distance between agents. Certainly, $\bbu^{\ast}_{i}(t)$ is a centralized controller since computing it requires agent $i$ to have instantaneous knowledge of $\bbv_{j}(t)$ and $\bbr_{j}(t)$ of every other agent $j$ in the team. We set $R_{\text{CA}} = 1\text{m}$.

%%%%%%%%%%%%%%%%%%%%%%%%%%%%%%%%%%%%%%%%%%%%%%%%%%%%%%%%%%%%%%%%%%%%%%%%%%%%%%%%
%%%%                            Communication network                       %%%%
%%%%%%%%%%%%%%%%%%%%%%%%%%%%%%%%%%%%%%%%%%%%%%%%%%%%%%%%%%%%%%%%%%%%%%%%%%%%%%%%

\myparagraph{Communication network.} The communication network between agents is determined by their proximity. If agents $i$ and $j$ are within a communication radius $R$ of each other then they are able to establish a link. This builds a communication graph with edge set $\ccalE(t)$ such that $(i,j) \in \ccalE(t)$ if and only if $\|\bbr_{i}(t) - \bbr_{j}(t) \|_{2} \leq R$. We adopt the corresponding binary adjacency matrix as the support $\bbS(t)$. We assume that communication exchanges occur within the interval determined by the sampling time $T_{\text{s}}$, so that the action clock and the communication clock coincide. We note that this is a simplified communication model, but that more involved models accounting for channel losses are possible by simply adjusting the communication graph $\ccalG(t)$ and the corresponding support matrix $\bbS(t)$. We set $R=2\text{m}$.

The state of the system $\bbx_{i}(t) \in \reals^{6}$, which are the messages transmitted among agents, is given by \cite{Tolstaya19-Flocking}
% eq:flockingState
\begin{align} \label{eq:flockingState}
    \bbx_{i}(t) = \bigg[ \quad & \sum_{j : v_{j} \in \ccalN_{i}(t)} \big(\bbv_{i}(t) - \bbv_{j}(t) \big), \\
    & \sum_{j : v_{j} \in \ccalN_{i}(t)} \frac{\bbr_{ij}(t)}{\| \bbr_{ij}(t)\|_{2}^{4}} ,
    \sum_{j:v_{j} \in \ccalN_{i}(t)} \frac{\bbr_{ij}(t)}{\| \bbr_{ij}(t)\|_{2}^{2}} \quad \bigg].\nonumber
\end{align}
Note that the value of the state \eqref{eq:flockingState} for each agent can be computed locally, depending on the relative distance and velocity between each agent and its immediate neighbors. The state \eqref{eq:flockingState} is used as the input to the GNN-based controllers.

%%%%%%%%%%%%%%%%%%%%%%%%%%%%%%%%%%%%%%%%%%%%%%%%%%%%%%%%%%%%%%%%%%%%%%%%%%%%%%%%
%%%%                                 Controllers                            %%%%
%%%%%%%%%%%%%%%%%%%%%%%%%%%%%%%%%%%%%%%%%%%%%%%%%%%%%%%%%%%%%%%%%%%%%%%%%%%%%%%%

\myparagraph{Decentralized learning architectures.} The communication network imposes a delayed information structure $\ccalX(t)$ [cf. \eqref{eq:delayedInformation}], and thus we adopt models based on unit-delay filters [cf. Sec.~\ref{sec:timeVarying}]. First, we consider linear graph filters (GF) $\sfH(\ccalX(t))$ given by \eqref{eq:delayedGraphFilter}. More specifically, we consider a cascade of two filters, the first one producing $F_{1}=G$ features after $K_{1} = K$ communications, and the second one combining those $G$ features into the $F_{2}= 2$ output features (with $K_{2}=0$), corresponding to the control action $\bbu_{i}(t)$ taken by each agent. Second, a two-layer GCNN $\sfPhi_{\text{GCNN}}(\ccalX(t))$ given by \eqref{eq:delayedGCNN}, with $F_{1}=G$, $K_{1} = K$, $F_{2} = 2$, $K_{2}=0$ and a hyperbolic tangent nonlinearity $\sigma(x) = \tanh(x)$. Third, GRNNs $\sfPhi_{\text{GRNN}}(\ccalX(t), \ccalZ(t))$, with $H=G$ and $K$ for the hidden state \eqref{eq:delayedGRNNhidden}, and $2$ output features and $K_{o}=0$ for \eqref{eq:delayedGRNNout}. The values of $G$ and $K$ are set separately for each controller as discussed in Experiment~1.

%%%%%%%%%%%%%%%%%%%%%%%%%%%%%%%%%%%%%%%%%%%%%%%%%%%%%%%%%%%%%%%%%%%%%%%%%%%%%%%%
%%%%%%%%%%%%%% CORRESPONDING TO SECTION sec:pathPlanning (Below) %%%%%%%%%%%%%%%

%%%%%%%%%%%%%%%%%%%%%%%%%%%%%%%%%%%%%%%%
%%%%             TABLE              %%%% tab:pathPlanning
%%%%%%%%%%%%%%%%%%%%%%%%%%%%%%%%%%%%%%%%
%%
{
    \begin{table*}[ht]
        \centering
        \caption{Average (std. deviation) evaluation measures for different hyperparameters of the learned architectures. The first column of tables shows error rate, while the second column shows the increase in flowtime, relative to the expert controller.}
        \label{tab:pathPlanning}
        \begin{subtable}{0.49\textwidth}
            \centering
            \begin{tabular}{c|ccc}
                $G$ $\backslash$ $K$ & $2$ & $3$ & $4$  \\ \hline
                $16$ & \cellorange $0.15 (\pm 0.09)$ & \cellorange $0.15 (\pm 0.09)$ & \cellorange $0.15 (\pm 0.09)$ \\
                $32$ & \cellorange $0.16 (\pm 0.08)$ & \cellorange $0.16 (\pm 0.08)$ & \cellorange $0.16 (\pm 0.08)$ \\
                $64$ & \cellblue $0.14 (\pm 0.09)$ & \cellorange $0.16 (\pm 0.08)$ & \cellblue $\mathbf{0.13 (\pm 0.09)}$ \\
            \end{tabular}
            \caption{GF -- Error rate}
            \label{subtab:SRGF}
        \end{subtable}
        \begin{subtable}{0.49\textwidth}
            \centering
            \begin{tabular}{c|ccc}
                $G$ $\backslash$ $K$ & $2$ & $3$ & $4$  \\ \hline
                $16$ & \cellpurple $0.093 (\pm 0.009)$ & \cellred $0.092 (\pm 0.009)$ & \cellred $0.090 (\pm 0.009)$ \\
                $32$ & \cellred $0.091 (\pm 0.009)$ & \cellpurple $0.093 (\pm  0.009)$ & \cellred $0.091 (\pm 0.009)$ \\
                $64$ & \cellred $0.092 (\pm 0.009)$ & \cellred $0.091 (\pm  0.009)$ & \cellred $\mathbf{0.090 (\pm 0.009)}$ \\
            \end{tabular}
            \caption{GF -- Flowtime increase}
            \label{subtab:FTGF}
        \end{subtable}

        \begin{subtable}{0.49\textwidth}
            \centering
            \begin{tabular}{c|ccc}
                $G$ $\backslash$ $K$ & $2$ & $3$ & $4$  \\ \hline
                $16$ & \cellblue $0.13 (\pm 0.09)$ & \cellblue $0.13 (\pm 0.09)$ & \cellblue $0.14 (\pm 0.09)$ \\
                $32$ & \cellgreen $0.09 (\pm 0.09)$ & \cellblue $0.11 (\pm 0.09)$ & \cellgreen $0.10 (\pm 0.09)$ \\
                $64$ & \cellgreen $0.10 (\pm 0.09)$ & \cellgreen $\mathbf{0.09 (\pm 0.09)}$ & \cellblue $0.11 (\pm 0.09)$ \\
            \end{tabular}
            \caption{GCNN -- Error rate}
            \label{subtab:SRGCNN}
        \end{subtable}
        \begin{subtable}{0.49\textwidth}
            \centering
            \begin{tabular}{c|ccc}
                $G$ $\backslash$ $K$ & $2$ & $3$ & $4$  \\ \hline
                $16$ & \cellorange $0.087 (\pm 0.009)$ & \cellblue $0.085 (\pm 0.009)$ & \cellblue $0.081 (\pm 0.008)$ \\
                $32$ & \cellgreen $0.078 (\pm 0.008)$ & \cellgreen $0.073 (\pm  0.007)$ & \cellgreen $0.075 (\pm 0.008)$ \\
                $64$ & \cellgreen $0.073 (\pm 0.009)$ & \cellgreen $\mathbf{0.071 (\pm  0.007)}$ & \cellgreen $0.071 (\pm 0.008)$ \\
            \end{tabular}
            \caption{GCNN -- Flowtime increase}
            \label{subtab:FTGCNN}
        \end{subtable}

        \begin{subtable}{0.49\textwidth}
            \centering
            \begin{tabular}{c|ccc}
                $G$ $\backslash$ $K$ & $2$ & $3$ & $4$  \\ \hline
                $16$ & \cellpurple $0.34 (\pm 0.07)$ & \cellpurple $0.26 (\pm 0.07)$ & \cellpurple $0.22 (\pm 0.09)$ \\
                $32$ & \cellred $0.17 (\pm 0.08)$ & \cellred $0.19 (\pm 0.09)$ & \cellpurple $0.21 (\pm 0.08)$ \\
                $64$ & \cellgreen $\mathbf{0.10 (\pm 0.09)}$ & \cellblue $0.13 (\pm 0.09)$ & \cellblue $0.11 (\pm 0.09)$ \\
            \end{tabular}
            \caption{GRNN -- Error rate}
            \label{subtab:SRGRNN}
        \end{subtable}
        \begin{subtable}{0.49\textwidth}
            \centering
            \begin{tabular}{c|ccc}
                $G$ $\backslash$ $K$ & $2$ & $3$ & $4$  \\ \hline
                $16$ & \cellpurple $0.10 (\pm 0.01)$ & \cellpurple $0.09 (\pm 0.01)$ & \cellpurple $0.09 (\pm 0.01)$ \\
                $32$ & \cellblue $0.085 (\pm 0.009)$ & \cellblue $0.082 (\pm  0.008)$ & \cellorange $0.088 (\pm 0.009)$ \\
                $64$ & \cellgreen $\mathbf{0.078 (\pm 0.008)}$ & \cellblue $0.084 (\pm  0.008)$ & \cellorange $0.088 (\pm 0.009)$ \\
            \end{tabular}
            \caption{GRNN -- Flowtime increase}
            \label{subtab:FTGRNN}
        \end{subtable}
    \end{table*}
}
%%
%%%% End of TABLE %%%%
%%%%%%%%%%%%%%%%%%%%%%

%%%%%%%%%%%%%% CORRESPONDING TO SECTION sec:pathPlanning (Above) %%%%%%%%%%%%%%%
%%%%%%%%%%%%%%%%%%%%%%%%%%%%%%%%%%%%%%%%%%%%%%%%%%%%%%%%%%%%%%%%%%%%%%%%%%%%%%%%

%%%%%%%%%%%%%%%%%%%%%%%%%%%%%%%%%%%%%%%%%%%%%%%%%%%%%%%%%%%%%%%%%%%%%%%%%%%%%%%%
%%%%                                   Training                             %%%%
%%%%%%%%%%%%%%%%%%%%%%%%%%%%%%%%%%%%%%%%%%%%%%%%%%%%%%%%%%%%%%%%%%%%%%%%%%%%%%%%

\myparagraph{Training.} The dataset is comprised of $400$ trajectories for training, $20$ for validation and $20$ for testing. Each trajectory is generated by positioning the agents at random in a circle such that their minimum initial distance is $0.1\text{m}$ and their initial velocities are picked also at random from the interval $[-3,3]\text{m}/\text{s}$ in each direction. We note that a bias velocity, also picked at random from $[-3,3]\text{m}/\text{s}$ is included so as to avoid the flocking velocity to be zero. The trajectories are of duration $2\text{s}$ and the maximum acceleration is $10\text{m}/\text{s}^{2}$. The models are trained for $30$ epochs with a batch size of at most $20$ trajectories, totaling $600$ training steps. Every $5$ training steps, the cost \eqref{eq:flockingObjective} is evaluated on the validation set, avoiding the need to compute the cost at every single training step, thereby saving on computational cost. After training has finished, we retain the model characterized by the set of parameters that has achieved the lowest cost \eqref{eq:flockingObjective} on the validation set to avoid overfitting \cite[Ch. 28]{Murphy12-ProbabilisticML}. We solve the imitation learning problem \eqref{eq:imitationLearning} using the ADAM algorithm \cite{Kingma15-ADAM} with learning rate $5 \cdot 10^{-4}$ and forgetting factors $0.9$ and $0.999$. The loss function used for the imitation learning is the mean squared error between the output of the model and the optimal control action. The evaluation measure is the cost \eqref{eq:flockingObjective}. We repeat the simulations for $10$ realizations of the dataset and report the average cost as well as the standard deviation.

\subsection{Experimental Results}

%%%%%%%%%%%%%%%%%%%%%%%%%%%%%%%%%%%%%%%%%%%%%%%%%%%%%%%%%%%%%%%%%%%%%%%%%%%%%%%%
%%%%                      Experiment 1: Hyperparameter Choice               %%%%
%%%%%%%%%%%%%%%%%%%%%%%%%%%%%%%%%%%%%%%%%%%%%%%%%%%%%%%%%%%%%%%%%%%%%%%%%%%%%%%%

\myparagraph{Experiment 1: Successful Imitation.} First, we test different values of features $G \in \{16, 32, 64\}$ and filter taps $K \in \{2,3,4\}$. Results are shown in Table \ref{tab:flocking}. Note that the GF is characterized by $6G(K+1)+2G$ learnable parameters, the same as the GCNN, while the GRNN learns $(6G+G^{2})(K+1)+2G$ parameters. We see that the linear graph filter has a performance that is five times worse than the nonlinear architectures. This is because we know that even for simple linear problems, the optimal decentralized solution is nonlinear \cite{Witsenhausen68-Counterexample}, and thus cannot be captured by a linear model. Then we see that the GRNN exhibits the best performance, and that the GCNN comes in second with reasonably good performance as well. We also observe that more features $G$ improves performance in this range, but not necessarily larger $K$. From this simulation we select the best pair $(G,K)$ for each of the three architectures and keep them for the following experiments. For these selected values, we note that, at the end time $T$, the velocity variation of the team [cf. \eqref{eq:flockingObjective}] is $0.0132(\pm0.043)$ for the GCNN and $0.0116(\pm0.0029)$ for the GRNN, showing that the team is successfully flocking together. More specifically, this is evidenced by the fact that the velocity is $0.11\text{m}/\text{s}$ within the average velocity of the team, which is in the order of $3\text{m}/\text{s}$, thus showing less than $4\%$ flocking error in the end.

%%%%%%%%%%%%%%%%%%%%%%%%%%%%%%%%%%%%%%%%%%%%%%%%%%%%%%%%%%%%%%%%%%%%%%%%%%%%%%%%
%%%%                        Experiment 2: Initial Conditions                %%%%
%%%%%%%%%%%%%%%%%%%%%%%%%%%%%%%%%%%%%%%%%%%%%%%%%%%%%%%%%%%%%%%%%%%%%%%%%%%%%%%%

\myparagraph{Experiment 2: Transference.} Second, we run tests for different initial conditions, namely different initial velocities (Fig.~ \ref{subfig:initVel}) and different communication radius (Fig.~\ref{subfig:initRadius}). These experiments test the robustness of the architectures to different initial conditions. We observe in Fig.~\ref{subfig:initVel} that larger initial velocities implies harder to control flocks, and thus the performance decreases as the initial velocities grow. Nevertheless, the GRNN seems to be more robust than the GCNN. With respect to the communication radius, we observe in Fig.~\ref{subfig:initRadius} that the larger the communication radius, the easier the flock is to control. This is expected since more agents can be reached and thus information travels faster with less delay. Again, the more robust architecture is the GRNN.

%%%%%%%%%%%%%%%%%%%%%%%%%%%%%%%%%%%%%%%%%%%%%%%%%%%%%%%%%%%%%%%%%%%%%%%%%%%%%%%%
%%%%                      Experiment 3: Transferring at scale               %%%%
%%%%%%%%%%%%%%%%%%%%%%%%%%%%%%%%%%%%%%%%%%%%%%%%%%%%%%%%%%%%%%%%%%%%%%%%%%%%%%%%

\myparagraph{Experiment 3: Transference at scale.} As a third and final experiment, we run a test on transferring at scale. We train the architectures for $50$ agents, but then we test them on $N \in \{50, 62, 75, 87, 100\}$ agents. Results are shown in Fig.~\ref{subfig:transferScale}. We observe that both nonlinear architectures (GCNN and GRNN) have good scalability, with the GRNN being virtually perfect, i.e. keeping the same performance as the number of agents increases. This is due to their equivariance and stability properties [cf. Sec.~\ref{sec:permutation}]. In essence, this last experiment shows that it is possible to learn a decentralized controller in a small network setting and then, once trained, transfer this solution to larger networks, without any re-training, successfully scaling.

%%%%%%%%%%%%%%%%%%%%%%%%%%%%%%%%%%%%%%%%%%%%%%%%%%%%%%%%%%%%%%%%%%%%%%%%%%%%%%%%
%%%%                         Comparison with previous work                  %%%%
%%%%%%%%%%%%%%%%%%%%%%%%%%%%%%%%%%%%%%%%%%%%%%%%%%%%%%%%%%%%%%%%%%%%%%%%%%%%%%%%

\begin{remark}\normalfont
%The use of GNNs to learn controllers for the flocking problem we consider in this section was first introduced in \cite{Tolstaya19-Flocking}. In this paper we use either a message-passing GNN or GRNN as a learning parametrization. This is different from \cite{Tolstaya19-Flocking} which uses an aggregation GNN \cite{Gama19-Archit}. The numerical results presented here compared with those in \cite{Gama19-Archit} indicate that message-passing GNNs and GRNNs work better than aggregation GNNs for this particular problem. Aggregation GNNs are easier to implement.
The use of GNNs to learn controllers for the flocking problem we consider in this section was first introduced in \cite{Tolstaya19-Flocking}. In this paper, we further consider a GRNN architecture that takes into account the time-variability of the problem and, as such, is shown to work better than the GCNN for a wide range of different initial velocities (Fig.~\ref{subfig:initVel}) and communication radii (Fig.~\ref{subfig:initRadius}). Furthermore, we carry out a comprehensive hyperparameter study (Table~\ref{tab:flocking}) to better understand the dependence of the architectures with each design choice. More importantly, we also show, in a detailed experiment, the transferability at scale (Fig.~\ref{subfig:transferScale}) of both GCNN and GRNN-based controllers, showing their capability of being trained in small teams (where an expert controller is available) and tested in larger teams without loss of performance (where an expert controller is unavailable).
\end{remark}

%%%%%%%%%%%%%%%%%%%%%%%%%%%%%%%%%%%%%%%%%%%%%%%%%%%%%%%%%%%%%%%%%%%%%%%%%%%%%%%%
%%%%                         SECTION : Path Planning                        %%%%
%%%%%%%%%%%%%%%%%%%%%%%%%%%%%%%%%%%%%%%%%%%%%%%%%%%%%%%%%%%%%%%%%%%%%%%%%%%%%%%%
%%%% sec:pathPlanning %%%%
%%%%%%%%%%%%%%%%%%%%%%%%%%

\section{Path Planning} \label{sec:pathPlanning}

%!TEX root = 00-controlGNN.tex

%%%%%%%%%%%%%%%%%%%%%%%%%%%%%%%%%%%%%%%%%%%%%%%%%%%%%%%%%%%%%%%%%%%%%%%%%%%%%%%%
%%%%                                                                        %%%%
%%%%                         SECTION : Path Planning                        %%%%
%%%%                                                                        %%%%
%%%%%%%%%%%%%%%%%%%%%%%%%%%%%%%%%%%%%%%%%%%%%%%%%%%%%%%%%%%%%%%%%%%%%%%%%%%%%%%%
%%%% sec:pathPlanning %%%%
%%%%%%%%%%%%%%%%%%%%%%%%%%
Efficient and collision-free path planning in multi-agent systems is fundamental to advancing mobility. The overarching aim is to generate jointly collision-free paths leading agents from their start positions to designated goal positions. In the discrete domain, this problem is generally referred to as Multi-Agent Path Finding (MAPF). Coupled centralized approaches, which consider the joint configuration space of all involved agents, have the advantage of producing optimal and complete plans, yet tend to be computationally expensive. Indeed, solving for optimality is NP-hard~\cite{Yu13-IntractablePathPlanning}, and although significant progress has been made towards alleviating the computational load~\cite{Sharon15-CBS, Ferner13-ODrM}, it still scales poorly in environments with a high number of potential path conflicts.

%%%%%%%%%%%%%%%%%%%%%%%%%%%%%%%%%%%%%%%%
%%%%             FIGURE             %%%% fig:pathPlanningInit
%%%%%%%%%%%%%%%%%%%%%%%%%%%%%%%%%%%%%%%%
%%
\begin{figure*}[ht]
    \hfill
    \begin{subfigure}{0.3\textwidth}
        \centering
        \includegraphics[height=0.559\textwidth, width = 0.9\textwidth]
        {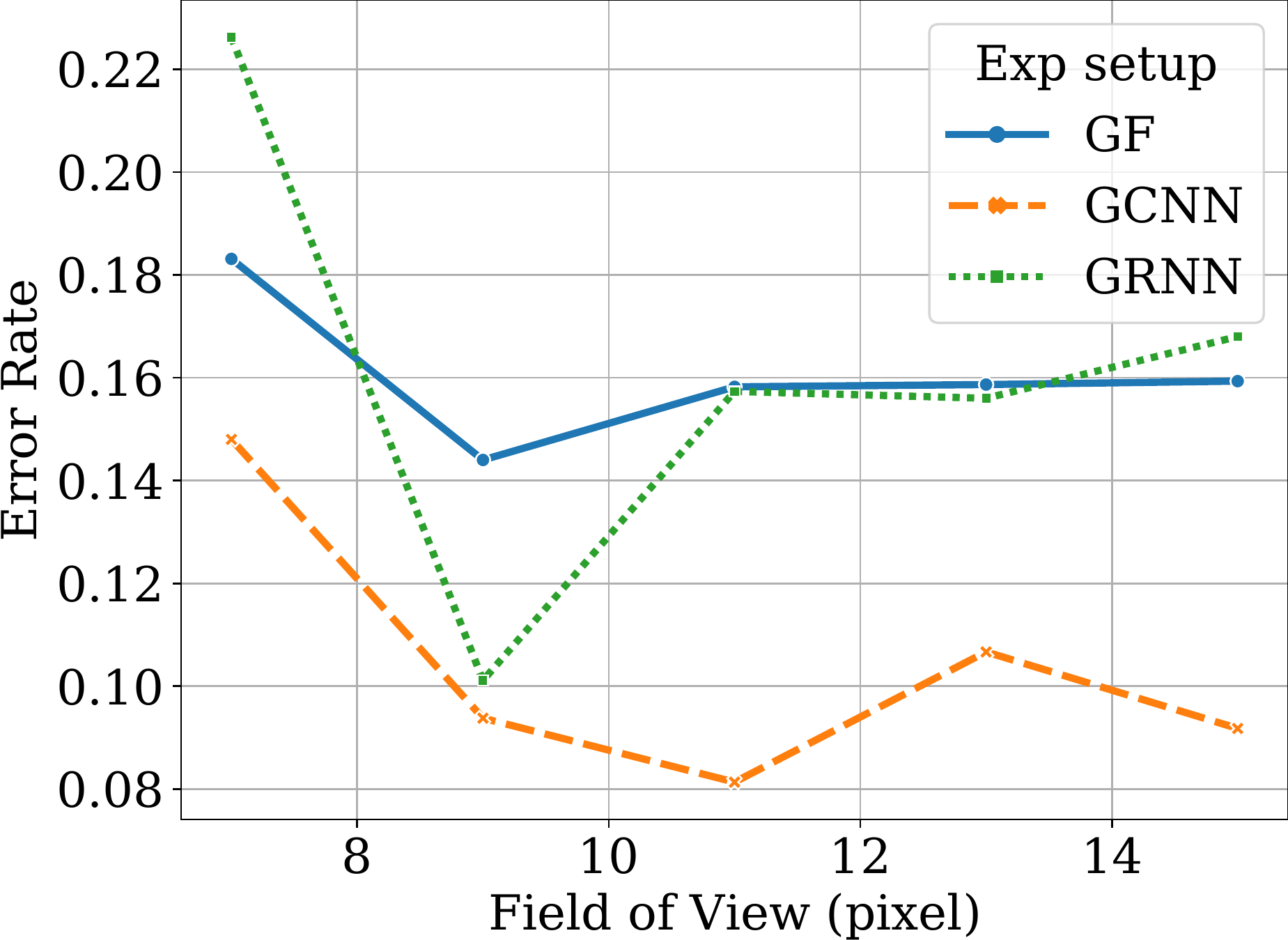}
        \caption{Field of view -- Success rate}
        \label{subfig:SRFOV}
    \end{subfigure}
    \hfill
    \begin{subfigure}{0.3\textwidth}
        \centering
        \includegraphics[height=0.559\textwidth, width = 0.9\textwidth]
        {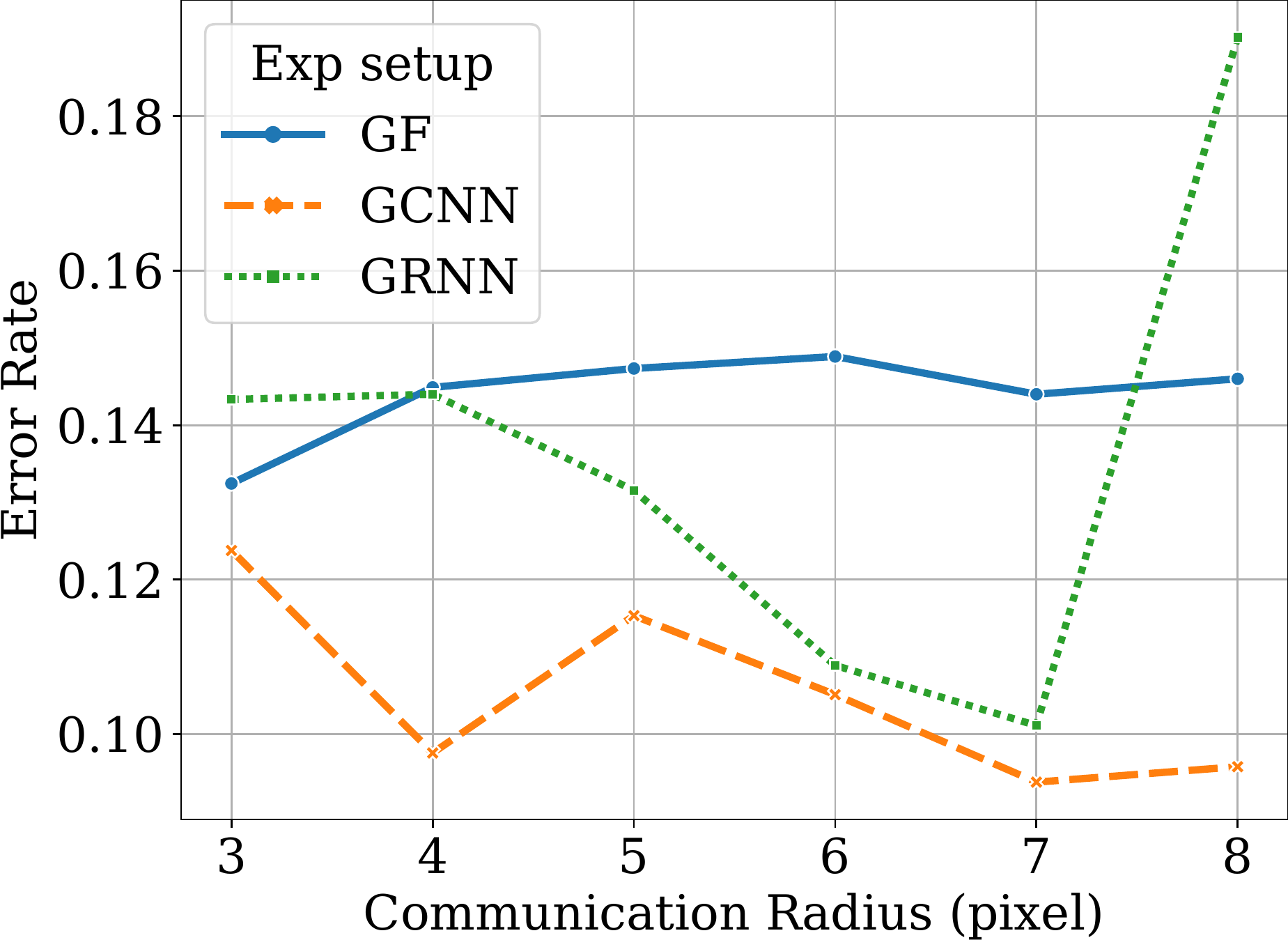}
        \caption{Communication radius -- Success rate}
        \label{subfig:SRradius}
    \end{subfigure}
    \hfill
    \begin{subfigure}{0.3\textwidth}
        \centering
        \includegraphics[height=0.559\textwidth, width = 0.9\textwidth]
        {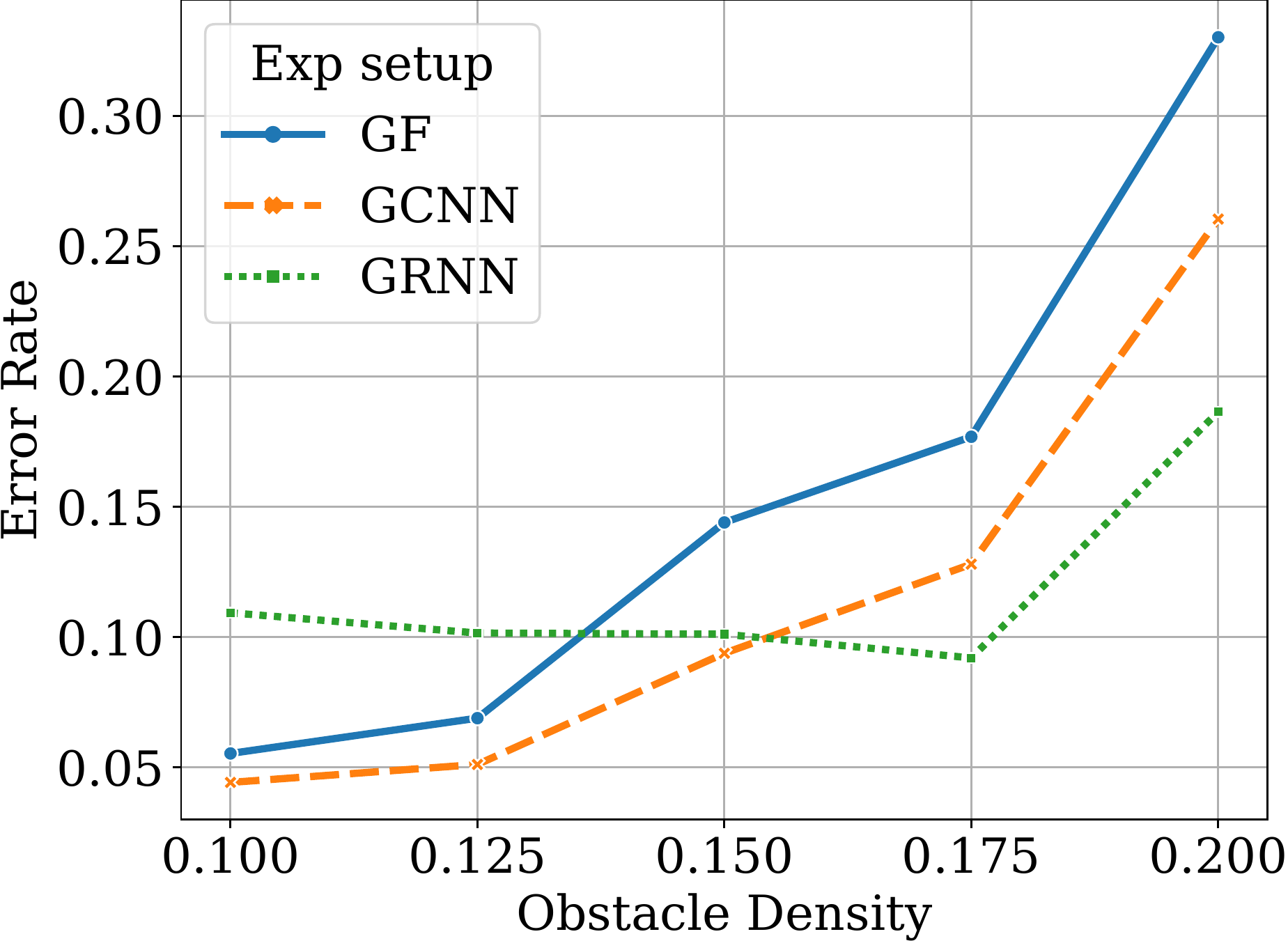}
        \caption{Obstacle density -- Success rate}
        \label{subfig:SRdensity}
    \end{subfigure}
    \hfill

    \hfill
    \begin{subfigure}{0.3\textwidth}
        \centering
        \includegraphics[height=0.559\textwidth, width = 0.9\textwidth]
        {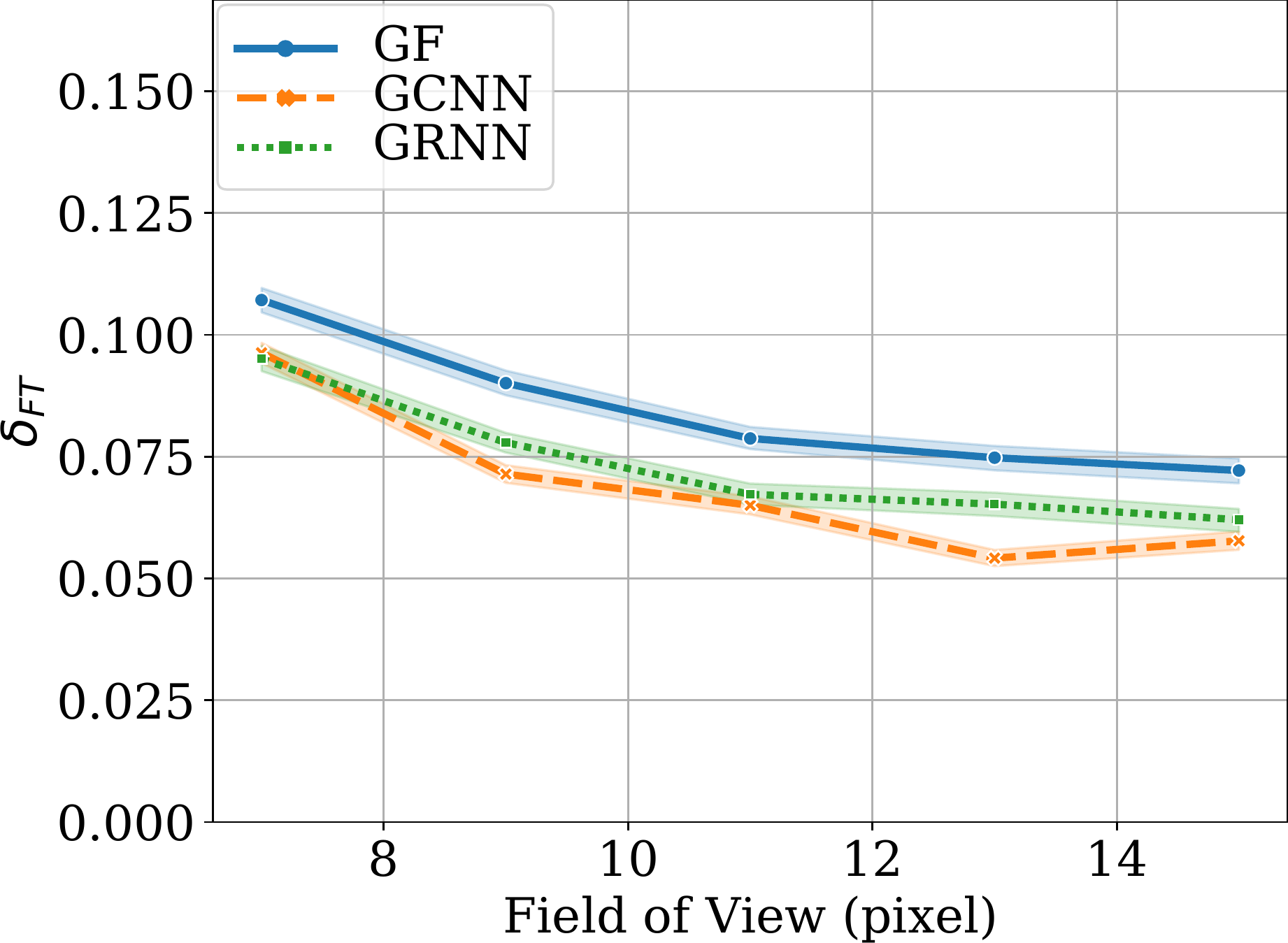}
        \caption{Field of view -- Flowtime increase}
        \label{subfig:FTFOV}
    \end{subfigure}
    \hfill
    \begin{subfigure}{0.3\textwidth}
        \centering
        \includegraphics[height=0.559\textwidth, width = 0.9\textwidth]
        {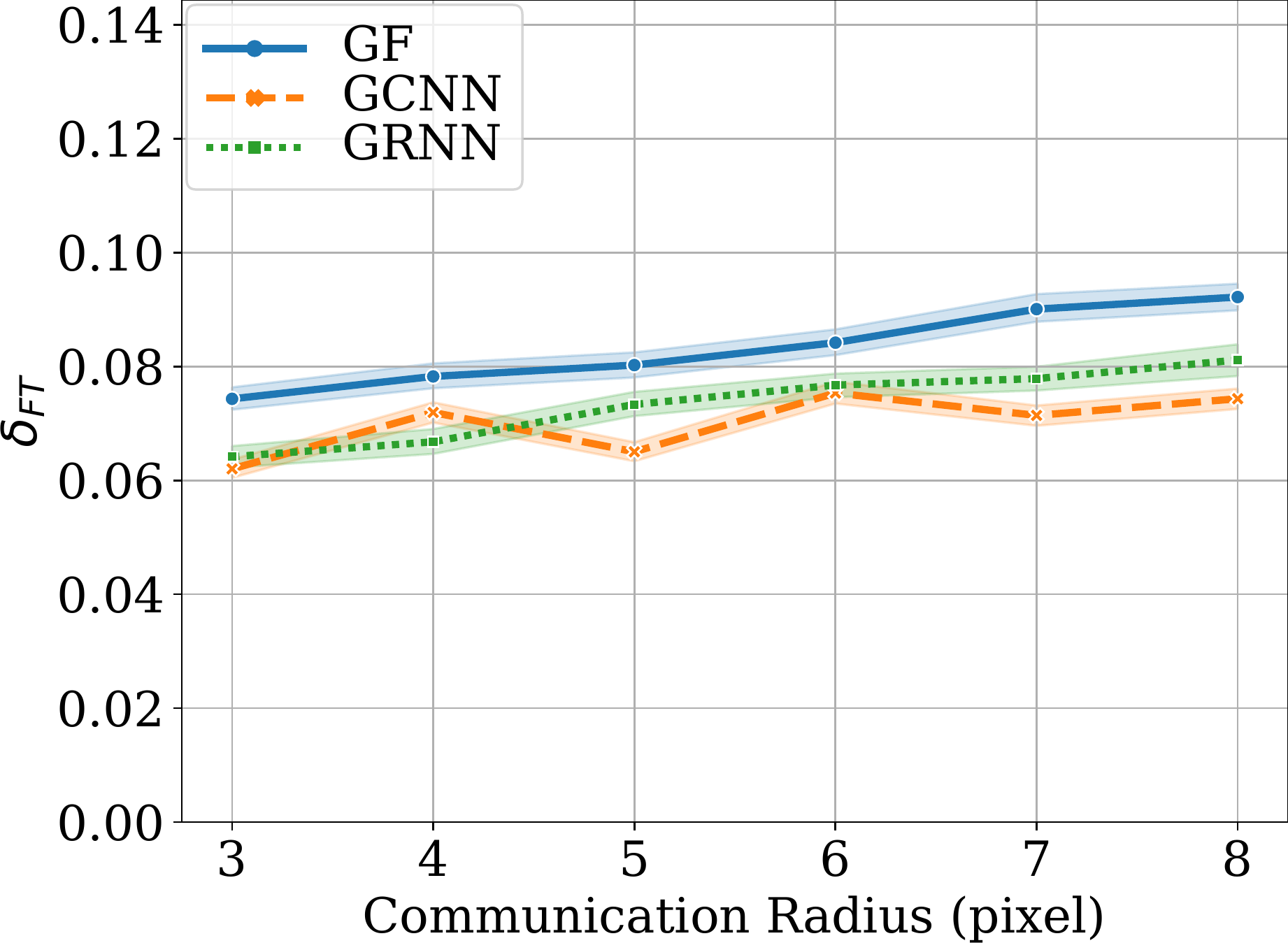}
        \caption{Communication radius -- Flowtime increase}
        \label{subfig:FTradius}
    \end{subfigure}
    \hfill
    \begin{subfigure}{0.3\textwidth}
        \centering
        \includegraphics[height=0.559\textwidth, width = 0.9\textwidth]
        {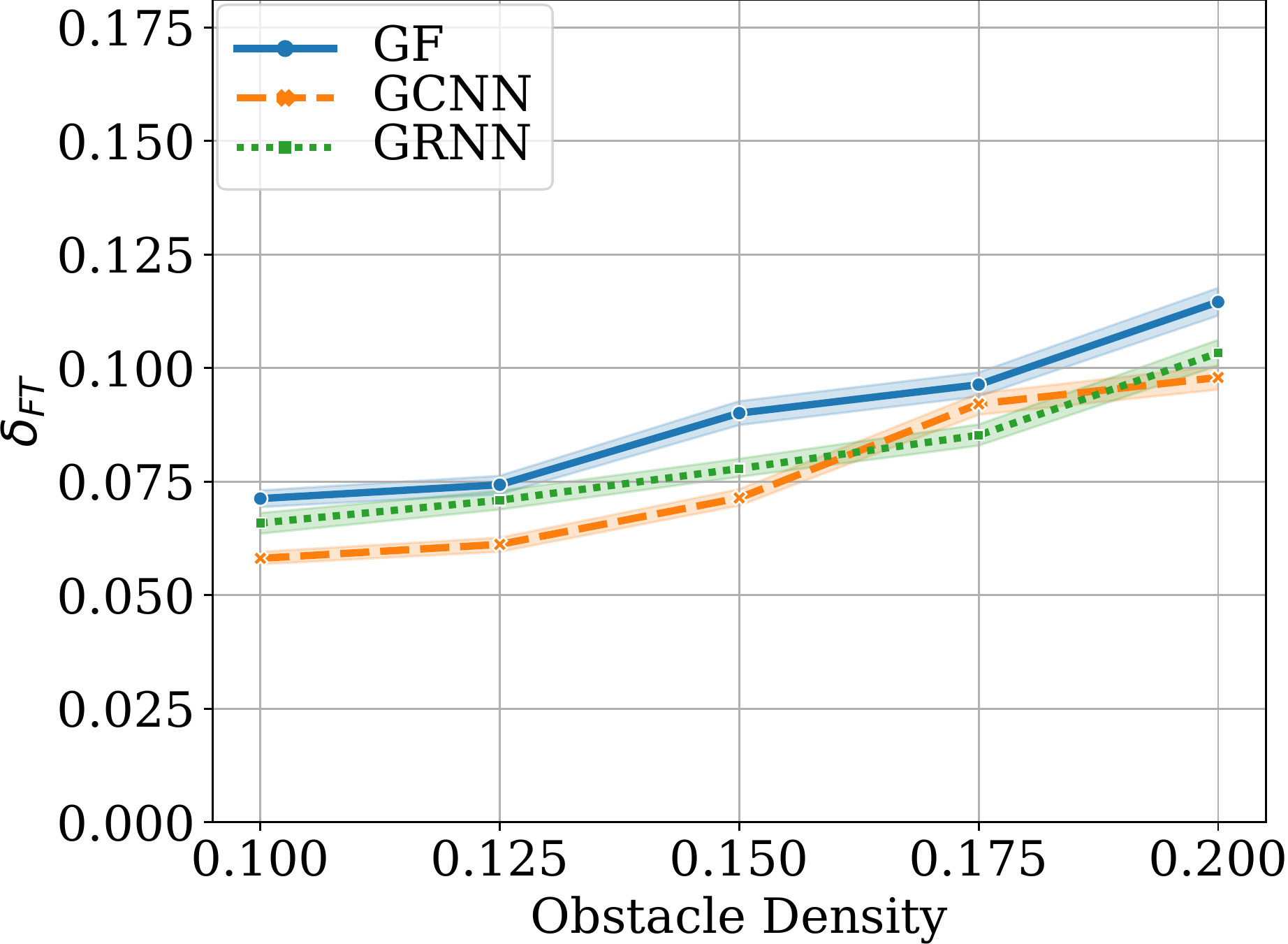}
        \caption{Obstacle density -- Flowtime increase}
        \label{subfig:FTdensity}
    \end{subfigure}
    \hfill
    % \caption{Change in the cost, relative to the optimal cost, for different values of \subref{subfig:initVel} initial velocity, \subref{subfig:initRadius} communication radius, and \subref{subfig:transferScale} size of the team at test time (always trained on teams of size $N=50$). The relative values of the linear graph filter controller exceed $7(\pm 1)$, $8(\pm 1)$, and $8(\pm2)$, respectively, and thus are not shown.}
    \caption{Top: Change in the success rate $\alpha$ for different values of \subref{subfig:SRFOV} field of vision, \subref{subfig:SRradius} communication radius, and \subref{subfig:SRdensity} obstacle density. Bottom: Change in flow time increase $\delta_{\text{FT}}$, for different values of \subref{subfig:FTFOV} field of view, \subref{subfig:FTradius} communication radius, and \subref{subfig:FTdensity} obstacle density.}
    \label{fig:pathPlanningInit}
\end{figure*}
%%
%%%% End of FIGURE %%%%
%%%%%%%%%%%%%%%%%%%%%%%

\subsection{System Description and Training}

%%%%%%%%%%%%%%%%%%%%%%%%%%%%%%%%%%%%%%%%%%%%%%%%%%%%%%%%%%%%%%%%%%%%%%%%%%%%%%%%
%%%%                                Dynamics                                %%%%
%%%%%%%%%%%%%%%%%%%%%%%%%%%%%%%%%%%%%%%%%%%%%%%%%%%%%%%%%%%%%%%%%%%%%%%%%%%%%%%%

% \begin{figure}[ht]
%     \centering
%     \includegraphics[width = 0.8\columnwidth]{}
%     \caption{Illustration of the input tensor as a binary map representation, where 1st channel is the partial observation of the environment; 2nd channel is the position of goal ($\bbp_{goal}^{i}$), or its projection onto the boundary of the field-of-view, and 3rd channel is self (agent) at center, with other agents within its field-of-view. }
%     \label{fig:input_tensor_setup}
% \end{figure}

\myparagraph{Dynamics.} Consider a $2D$ finite grid, where agents can move to adjacent positions in the grid at every time instant, as given by a \emph{decision clock} $t = 0,1,\ldots$. This $2D$ grid contains some positions that are blocked and to which the agents cannot access. These positions are called \emph{obstacles} and an agent attempting to move into such a position would incur in a \emph{collision}. Each agent can only partially observe the immediate grid, as given by a rectangle of size $W_{\text{FOV}} \times H_{\text{FOV}}$, centered at the agent. They further know the relative direction of the target. Adopting a local map centered at the agent and providing the relative distance to the target leads to a cost function that satisfies Def.~\ref{def_sys_invariance}. Note that this would not be possible if a centralized map is given to each agent, since the distance to the target would depend on the labeling of the agent. At every instant of the decision clock, an agent takes an action $\bbu_{i}(t)$, moving into one of the four adjacent positions in the grid (or deciding to stay put), with the objective of moving towards its goal on a collision-free path. Note that there are only five possible actions. We thus formulate the multi-agent path planning problem as a sequential decision-making problem~\cite{Li20-Planning, Li20-Message}. We consider $N=10$ agents, navigating a $20 \times 20$  grid with a $15\%$ obstacle density.

%%%%%%%%%%%%%%%%%%%%%%%%%%%%%%%%%%%%%%%%%%%%%%%%%%%%%%%%%%%%%%%%%%%%%%%%%%%%%%%%
%%%%                               Evaluation                               %%%%
%%%%%%%%%%%%%%%%%%%%%%%%%%%%%%%%%%%%%%%%%%%%%%%%%%%%%%%%%%%%%%%%%%%%%%%%%%%%%%%%

\myparagraph{Evaluation.} We measure the performance following two metrics. First, the \emph{error rate} $\alpha$ given by the ratio of unsuccessful cases over the total number of tested cases. We consider a case to be unsuccessful if \emph{any} of its agents fail to arrive at their destinations before the set timeout. Note that the expert controller (to be described next) has an error rate of $0$ as it is always successful. Second, we use the flowtime increase $\delta_{\text{FT}}$, which measures the excess time that takes to complete the objective $FT$, relative to the time taken by the expert controller $FT^{\ast}$, that is $\delta_{\text{FT}} = (FT - FT^{\ast})/FT^{\ast}$.

%%%%%%%%%%%%%%%%%%%%%%%%%%%%%%%%%%%%%%%%%%%%%%%%%%%%%%%%%%%%%%%%%%%%%%%%%%%%%%%%
%%%%                            Centralized controller                      %%%%
%%%%%%%%%%%%%%%%%%%%%%%%%%%%%%%%%%%%%%%%%%%%%%%%%%%%%%%%%%%%%%%%%%%%%%%%%%%%%%%%

\myparagraph{Expert controller.} To learn the decentralized controllers by means of imitation learning [cf. \eqref{eq:imitationLearning}], we utilize a controller $\bbU^{\ast}(t)$ called Conflict-Based Search (CBS)~\cite{Sharon15-CBS}. CBS finds a solution by doing tree-based search on the space of possible paths, resulting in a computationally costly controller that is only applicable to small teams. We use it only at training time to generate the set of optimal, collision-free paths.% in various task configurations in the world $\mathcal{W}$.% The resulting learned GNN-based controllers are decentralized and computationally efficient.
%\qingbiao{CBS utilises a two-level algorithm to find an optimal solution based on a constraint tree, where each node consists of a set of constraints. The conflict is defined by two or more agents that occupy a particular vertex in the graph at the same time. At the high level, a search is performed on a tree based on conflicts between agents. At a low level, a search is performed only for a single agent at a time. In many cases, this reformulation enables CBS to examine fewer states than A* while still maintaining optimality. This algorithm can obtain an optimal and complete solution and outperform A* in environments with more bottlenecks. }

%~\cite{barer2014suboptimal}
%Enhanced Conflict-Based Search (ECBS) is a bounded suboptimal

%%%%%%%%%%%%%%%%%%%%%%%%%%%%%%%%%%%%%%%%%%%%%%%%%%%%%%%%%%%%%%%%%%%%%%%%%%%%%%%%
%%%%                            Communication network                       %%%%
%%%%%%%%%%%%%%%%%%%%%%%%%%%%%%%%%%%%%%%%%%%%%%%%%%%%%%%%%%%%%%%%%%%%%%%%%%%%%%%%

\myparagraph{Communication network.} As is the case in Sec.~\ref{sec:flocking}, the communication network is given by a graph $\ccalG(t) = (\ccalV, \ccalE(t))$, where two agents are able to communicate between themselves if they are close, i.e. $(i,j) \in \ccalE(t)$ if and only if $\|\bbr_{i}(t) - \bbr_{j}(t)\|_{2} \leq R$, with $\bbr_{i}(t)$ the position of agent $i$ at time $t$, and for some communication radius $R$. We adopt the binary adjacency matrix $\bbS(t)$ as the shift operator. We note that, unlike Sec.~\ref{sec:flocking}, for each time $t$ that represents the decision clock, there can be arbitrarily many communications exchanges. That is, the communication clock can be arbitrarily faster than the decision clock. This makes sense in the current problem, since for each decision, the agents move one position in the grid, and can wait statically until they make the following decision.

%%%%%%%%%%%%%%%%%%%%%%%%%%%%%%%%%%%%%%%%%%%%%%%%%%%%%%%%%%%%%%%%%%%%%%%%%%%%%%%%
%%%%                                 Controllers                            %%%%
%%%%%%%%%%%%%%%%%%%%%%%%%%%%%%%%%%%%%%%%%%%%%%%%%%%%%%%%%%%%%%%%%%%%%%%%%%%%%%%%

\myparagraph{Decentralized learning architectures.} We consider a joint learning architecture whereby each agent processes the local map by means of a regular CNN, and then uses the resulting features as input to a decentralized architecture, i.e. a graph filter, a GCNN or a GRNN. In short, by jointly training both architectures, the system learns to extract the best features as it pertains to the communication network between the agents. Note that, since the decision clock $t$ is slower than the communication clock, then the implementation of a graph filter can be done as in \eqref{eq:graphFilter}, for each time instant $t$. The CNN consists of five layers, all having filters with a kernel of size $3$ with unit stride and zero-padding, followed by a ReLU nonlinearity $\sigma(x) = \max\{0,x\}$; a max pooling of size $2$ is used in the odd-numbered layers. Then, for the decentralized controller, we consider first, a linear graph filter with two filters, one consisting of $F_{1}=G$ features and $K_{1}=K$ hops, and the second layer is the readout layer with $F_{2}=5$ and $K_{2}=0$. The second decentralized learning architecture is the two-layer GCNN with $F_{1}=G$, $K_{1}=K$, $F_{2}=5$ and $K_{2}=0$; and with $\sigma(x)=\tanh(x)$. Finally, the third model is a GRNN with $H=G$ and $K$ for the hidden state \eqref{eq:delayedGRNNhidden}, and $2$ output features and $K_{o}=0$ for \eqref{eq:delayedGRNNout}. The values of $G$ and $K$ are set separately for each controller as discussed in Experiment~1.

%%%%%%%%%%%%%%%%%%%%%%%%%%%%%%%%%%%%%%%%
%%%%             FIGURE             %%%% fig:pathPlanningTransfer
%%%%%%%%%%%%%%%%%%%%%%%%%%%%%%%%%%%%%%%%
%%
\begin{figure*}[ht]
    \hfill
    \begin{subfigure}{0.3\textwidth}
        \centering
        \includegraphics[width = 0.9\textwidth]
        {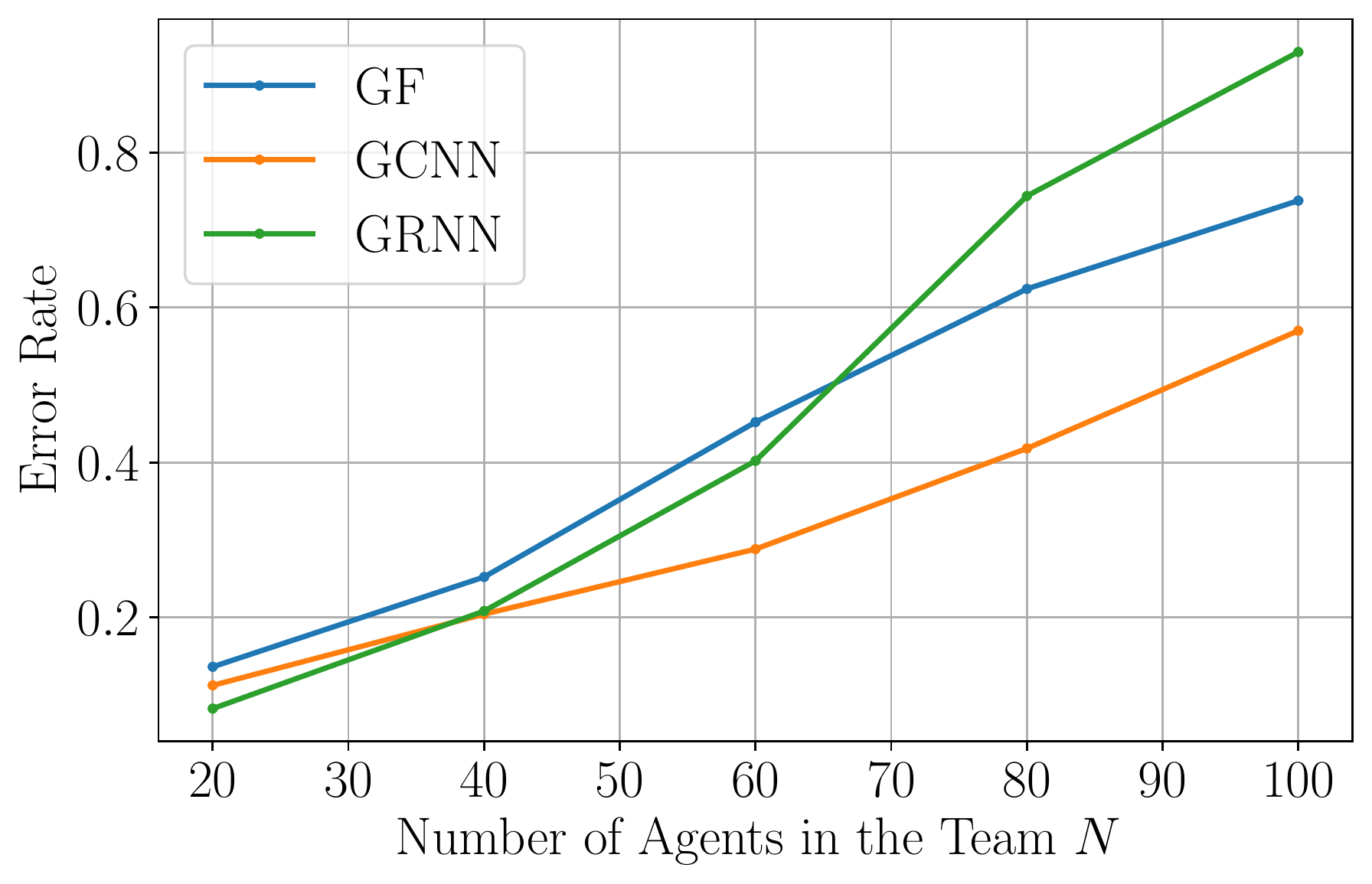}
        \caption{Transfer at scale -- Error rate}
        \label{subfig:SRtransferScale}
    \end{subfigure}
    \hfill
    \begin{subfigure}{0.3\textwidth}
        \centering
        \includegraphics[width = 0.9\textwidth]
        {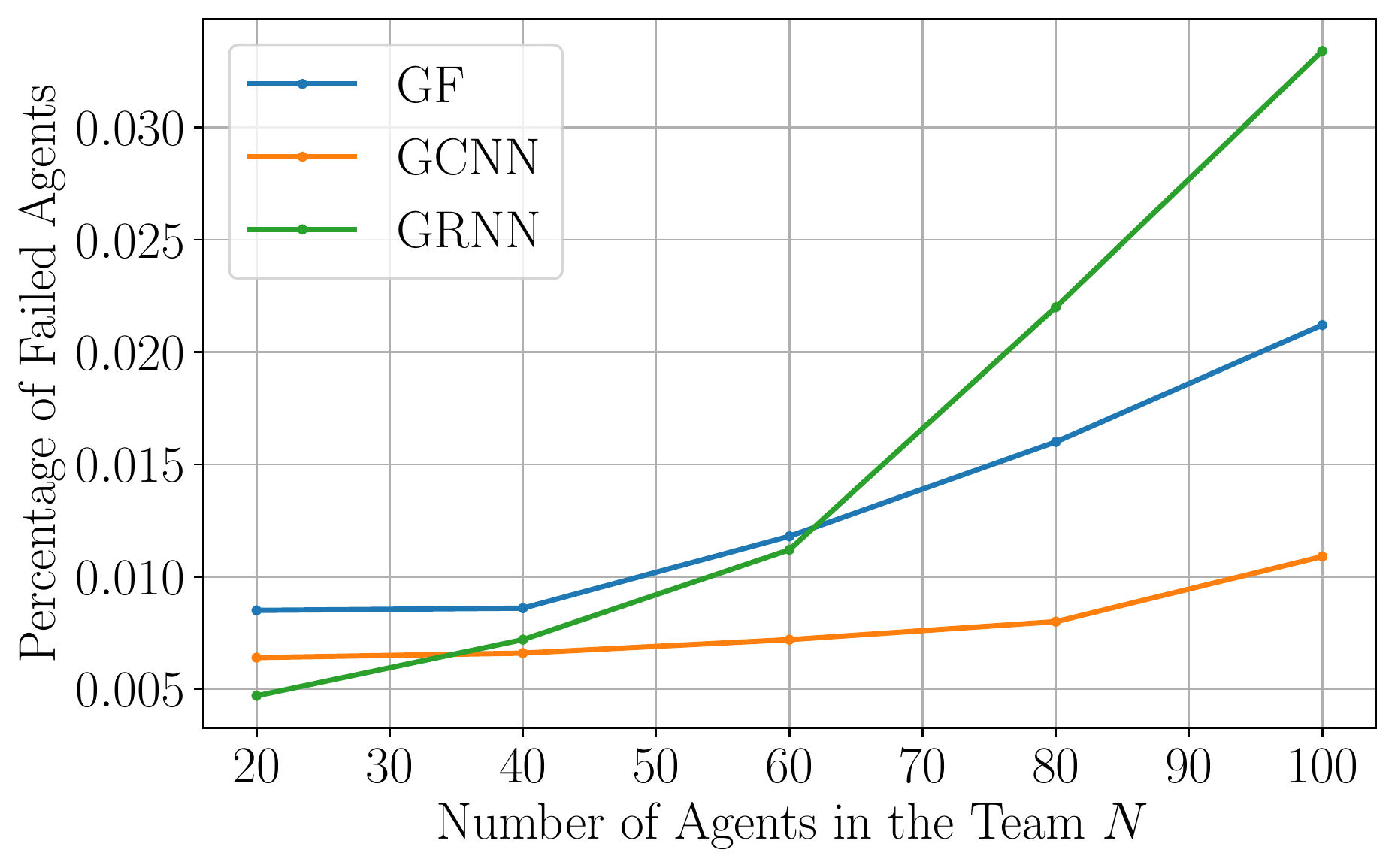}
        \caption{Transfer at scale -- Ratio of failed agents}
        \label{subfig:SAtransferScale}
    \end{subfigure}
    \hfill
    \begin{subfigure}{0.3\textwidth}
        \centering
        \includegraphics[width = 0.9\textwidth]
        {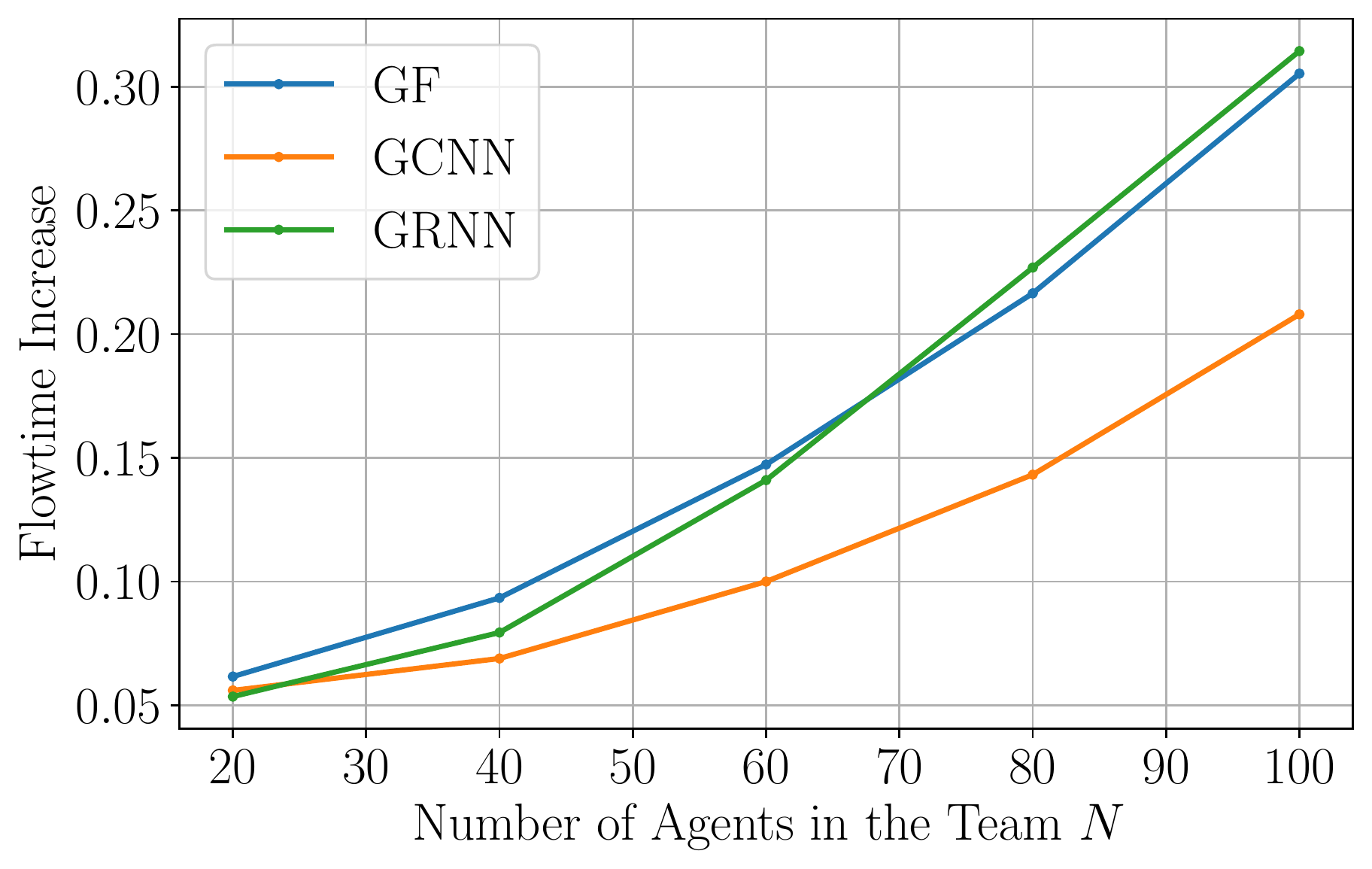}
        \caption{Transfer at scale -- Flowtime increase}
        \label{subfig:FTtransferScale}
    \end{subfigure}
    \hfill

    \hfill
    \begin{subfigure}{0.3\textwidth}
        \centering
        \includegraphics[width = 0.9\textwidth]
        {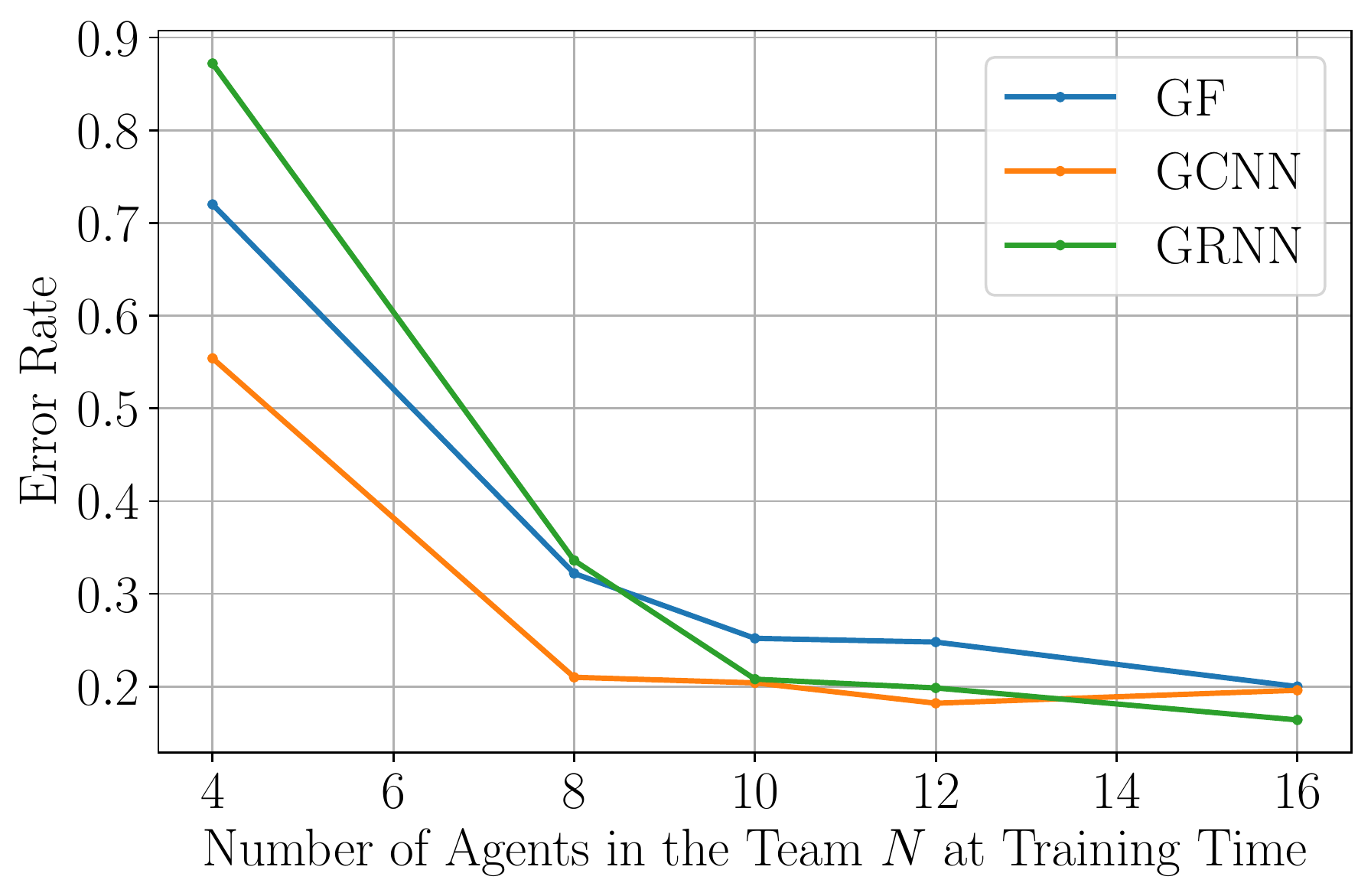}
        \caption{Threshold -- Error rate}
        \label{subfig:SRthreshold}
    \end{subfigure}
    \hfill
    \begin{subfigure}{0.3\textwidth}
        \centering
        \includegraphics[width = 0.9\textwidth]
        {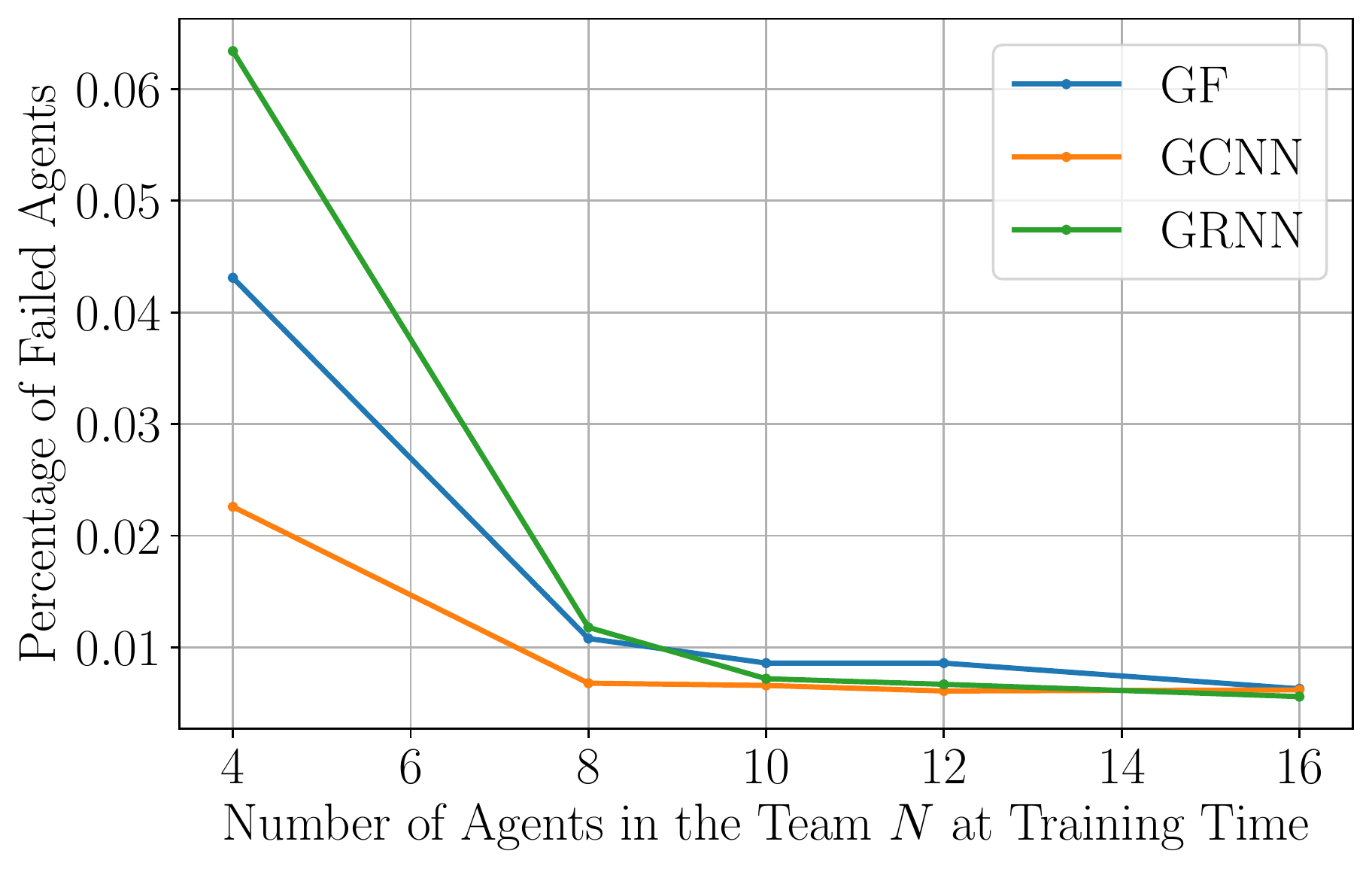}
        \caption{Threshold -- Ratio of failed agents}
        \label{subfig:SAthreshold}
    \end{subfigure}
    \hfill
    \begin{subfigure}{0.3\textwidth}
        \centering
        \includegraphics[width = 0.9\textwidth]
        {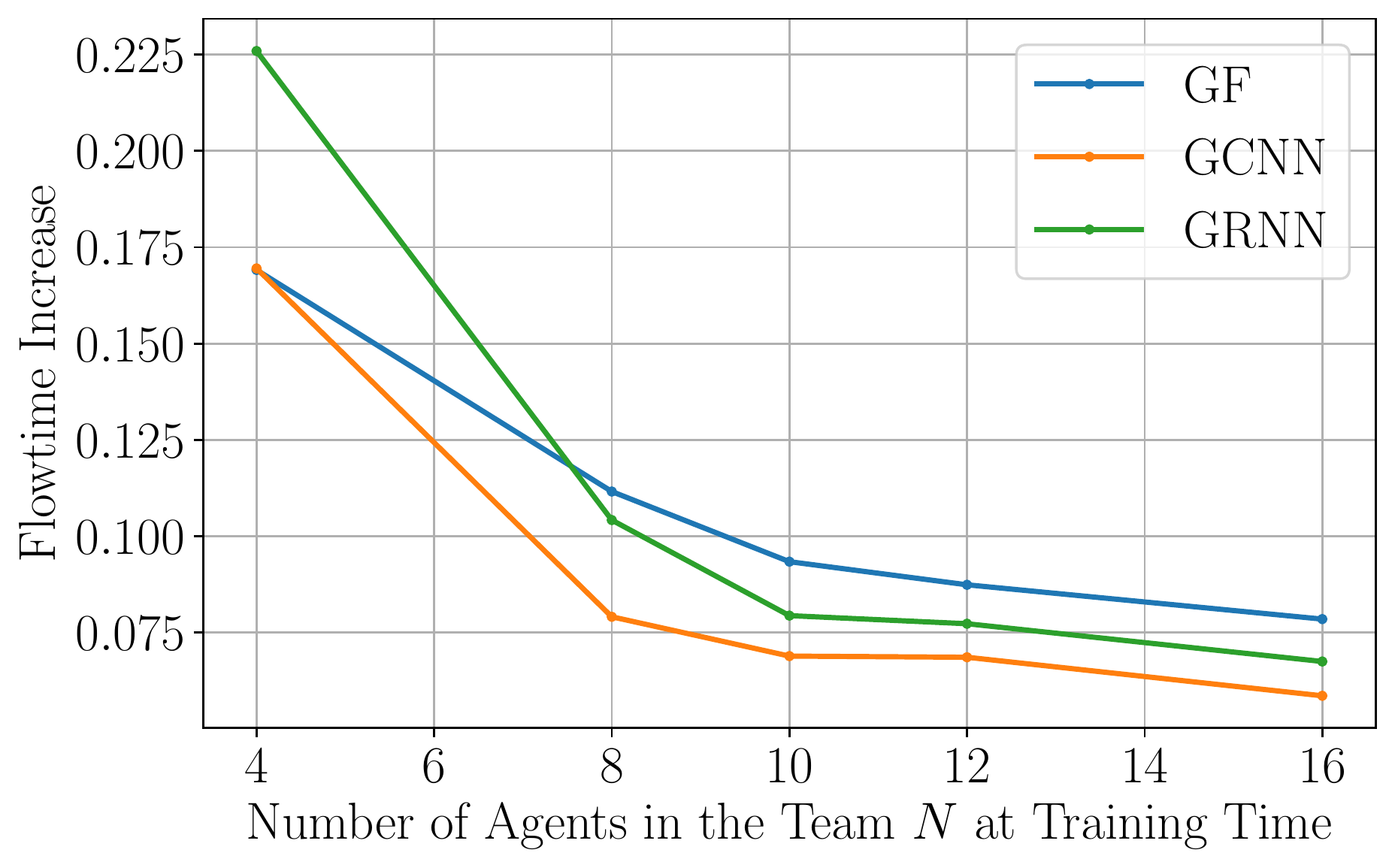}
        \caption{Threshold -- Flowtime increase}
        \label{subfig:FTthreshold}
    \end{subfigure}
    \hfill
    \caption{Experiments about transfering at scale. The learning architectures are trained on a $20 \times 20$ map with $10$ agents and tested on a $50 \times 50$ map with the number of agents indicated. \subref{subfig:SRtransferScale} Error rate: number of cases in the test set where \emph{any} agent fails to arrive at their goal. \subref{subfig:SAtransferScale} Ratio of failed agents (ratio of agents that fail to arrive at their goal). \subref{subfig:FTtransferScale} Increase in flowtime.}
    \label{fig:pathPlanningTransfer}
\end{figure*}
%%
%%%% End of FIGURE %%%%
%%%%%%%%%%%%%%%%%%%%%%%

%%%%%%%%%%%%%%%%%%%%%%%%%%%%%%%%%%%%%%%%%%%%%%%%%%%%%%%%%%%%%%%%%%%%%%%%%%%%%%%%
%%%%                                   Training                             %%%%
%%%%%%%%%%%%%%%%%%%%%%%%%%%%%%%%%%%%%%%%%%%%%%%%%%%%%%%%%%%%%%%%%%%%%%%%%%%%%%%%

\myparagraph{Training.} We instantiate $600$ different maps, using $420$ for training, $90$ for validation, and $90$ for testing. For each map, we generate $50$ cases of placement of agents and targets. This generative process is done at uniformly at random. The models are trained for $150$ epochs with a batch size of $64$. We solve the imitation learning problem considering a cross-entropy loss function between the learned controller and the expert one, since we have only $5$ possible actions. We use the ADAM algorithm with learning rate that decays from $10^{-3}$ to $10^{-6}$ following a cosine annealing, and forgetting factors of $0.9$ and $0.999$, respectively. We also add a $L_{2}$ regularization to the trained parameters with a penalty of $10^{-5}$. During training, we consider that the trajectories are updated following an online expert~\cite{Li20-Planning}, whereby, on every validation step, we add $500$ new trajectories to the training set -- each trajectory obtained by using the controller learned up to that point, and then correcting them by means of the expert controller.

\subsection{Experimental Results}

%%%%%%%%%%%%%%%%%%%%%%%%%%%%%%%%%%%%%%%%%%%%%%%%%%%%%%%%%%%%%%%%%%%%%%%%%%%%%%%%
%%%%                      Experiment 1: Hyperparameter Choice               %%%%
%%%%%%%%%%%%%%%%%%%%%%%%%%%%%%%%%%%%%%%%%%%%%%%%%%%%%%%%%%%%%%%%%%%%%%%%%%%%%%%%

\myparagraph{Experiment 1: Hyperparameter Choice.} First, we test different values of features $G \in \{16, 32, 64\}$ and filter taps $K \in \{2, 3, 4\}$. Results are summarized in Table.~\ref{tab:pathPlanning}. Similar to flocking, we see that the performance of the linear graph filter is worse than the nonlinear architectures based on the GCNN and GRNN, both in terms of error rate and flowtime increase. Unlike flocking, however, we see that the GCNN exhibits the best performance. This could be explained due to the fact that the communication clock and the decision clock are different. That is, since the GCNN is capable of capturing several hop information before the graph changes, it does not suffer from quick topology changes. Also, the nature of path planning is likely to rely more on current and future steps rather than previous steps (historical information). We also observe that the performance of the GRNN is highly sensitive to changes in the hyperparameter choice $G$ and $K$. From this simulation, we select the best pair $(G, K)$ for each of the three architectures and keep them for the following experiments; namely, $G=64, K=4$ for the graph filter, $G=64, K=3$ for the GCNN, and $G=64, K=2$, for the GRNN.

%%%%%%%%%%%%%%%%%%%%%%%%%%%%%%%%%%%%%%%%%%%%%%%%%%%%%%%%%%%%%%%%%%%%%%%%%%%%%%%%
%%%%                        Experiment 2: Initial Conditions                %%%%
%%%%%%%%%%%%%%%%%%%%%%%%%%%%%%%%%%%%%%%%%%%%%%%%%%%%%%%%%%%%%%%%%%%%%%%%%%%%%%%%

\myparagraph{Experiment 2: Initial conditions.} Second, we run tests for different initial conditions, namely different field of view size, different communication radius and different obstacle density in the environment. Results are shown in Fig.~\ref{fig:pathPlanningInit}. These experiments test the robustness of the architectures to a wider range of setups. First, we observe in Figs.~\ref{subfig:SRFOV}~and~\ref{subfig:FTFOV} that a bigger field of view leads to improved performance, both in terms of the error rate and the flowtime increase. This is because each agent has immediate access to more information about the environment and can thus improve its decisions based on this. We note that the GCNN seems to be more robust than the GRNN. When changing the communication radius, we observe in Fig.~\ref{subfig:SRradius} that an increased communication radius decreases the error rate slightly (for the GCNN and the GRNN) likely because more information is used to make a decision. However, as observed in Fig.~\ref{subfig:FTradius}, increasing the communication radius may cause redundant information exchanges, resulting in higher increase in the flowtime. Finally, we consider different obstacles densities, with results shown in Figs.~\ref{subfig:SRdensity}~and~\ref{subfig:FTdensity}. In this last case, as the obstacle density in the environment increases, the path planning becomes increasingly challenging, as evidence in the increase on the error rate (Fig.~\ref{subfig:SRdensity}) and the increase in the flowtime (Fig.~\ref{subfig:FTdensity}). Interestingly, the GRNN becomes better, relatively speaking, as the obstacle density increases. This could be explained by the fact that more obstacles implies more landmarks that can be learned (i.e., mapped through historical information) to assist navigation.% is likely to be because high obstacle density contributes to more landmarks in the map and can be used to memorize the partially observed environment to assist navigation.

%%%%%%%%%%%%%%%%%%%%%%%%%%%%%%%%%%%%%%%%%%%%%%%%%%%%%%%%%%%%%%%%%%%%%%%%%%%%%%%%
%%%%                      Experiment 3: Transferring at scale               %%%%
%%%%%%%%%%%%%%%%%%%%%%%%%%%%%%%%%%%%%%%%%%%%%%%%%%%%%%%%%%%%%%%%%%%%%%%%%%%%%%%%

\myparagraph{Experiment 3: Transfer at scale.} Finally, we carried out the experiment on transferring at scale. Namely, we train the architectures on a $20 \times 20$ map with $10$ agents, and then we test it on a $50 \times 50$ map with an increasing number of agents $N \in \{20, 40, 60, 80, 100\}$, while maintaining the same obstacle density. We see the results in Fig.~\ref{fig:pathPlanningTransfer}. The error rate (Fig.~\ref{subfig:SRtransferScale}) increases considerably as the number of agents increases. Yet, it is still reasonable for $40$ agents (a team four times larger than at training time). However, recall that the error rate considers the number of cases where \emph{any} agent fails to reach their goal. When we, instead, look at the ratio of agents that failed (Fig.~\ref{subfig:SAtransferScale}), then we see that the architectures scale well, with less than $3\%$ of the agents failing to reach their goal when testing on a team with $100$ agents (ten times the size of the team at training time). With respect to the increase in flowtime (Fig.~\ref{subfig:FTtransferScale}), we observe a similar behavior, in that up to $40$ agents the learning architectures scale successfully. To complement these insights, we run a final experiment where we set the testing team size to $N=40$ agents, and we train on teams of increasingly larger size $N \in \{4, 8, 10, 12, 16\}$. The results in Figs.~\ref{subfig:SRthreshold}-\ref{subfig:FTthreshold} show a thresholding effect when transferring at scale:  There seems to be a minimum number of agents (at training time) that scales properly at test time, and after that number, adding more agents at training time does not necessarily improve performance. %This is also observed in the flocking experiments, where the performance is virtually the same for all the team sizes considered (see Fig.~\ref{subfig:transferScale}).

%%%%%%%%%%%%%%%%%%%%%%%%%%%%%%%%%%%%%%%%%%%%%%%%%%%%%%%%%%%%%%%%%%%%%%%%%%%%%%%%
%%%%                         Comparison with previous work                  %%%%
%%%%%%%%%%%%%%%%%%%%%%%%%%%%%%%%%%%%%%%%%%%%%%%%%%%%%%%%%%%%%%%%%%%%%%%%%%%%%%%%

\begin{remark}\normalfont
The problem of path planning has been first addressed using a GCNN-based controller in \cite{Li20-Planning}. In this paper, we consider not only a new architecture (a GRNN) but also a whole new suite of experiments. First, it is observed that the GRNN-based controller does not improve the performance of the GCNN-based one. This suggests that the time variation in this problem is not significant (see discussion in Experiment~1). Second, we have carried out new experiments for different scenarios, varying the size of the field of view, the communication radius, and the obstacle density (Fig.~\ref{fig:pathPlanningInit}) to illustrate how the GNN-based controllers perform under different practical settings. Third, and most importantly, we have carefully analyzed the transferability at scale, uncovering a thresholding effect. Specifically, Figs.~\ref{subfig:SRthreshold},~\ref{subfig:SAthreshold},~and~\ref{subfig:FTthreshold} show that there is a minimum number of agents required at training time for the transferability at scale to hold. Once this minimum number of agents is met, the performance does not significantly improve when initially trained on a larger number of agents, exhibiting a thresholding effect.
\end{remark}

%%%%%%%%%%%%%%%%%%%%%%%%%%%%%%%%%%%%%%%%%%%%%%%%%%%%%%%%%%%%%%%%%%%%%%%%%%%%%%%%
%%%%                          SECTION : Conclusions                         %%%%
%%%%%%%%%%%%%%%%%%%%%%%%%%%%%%%%%%%%%%%%%%%%%%%%%%%%%%%%%%%%%%%%%%%%%%%%%%%%%%%%
%%%% sec:conclusions %%%%
%%%%%%%%%%%%%%%%%%%%%%%%%

\section{Conclusions} \label{sec:conclusions}

%!TEX root = 00-controlGNN.tex

%%%%%%%%%%%%%%%%%%%%%%%%%%%%%%%%%%%%%%%%%%%%%%%%%%%%%%%%%%%%%%%%%%%%%%%%%%%%%%%%
%%%%                                                                        %%%%
%%%%                          SECTION : Conclusions                         %%%%
%%%%                                                                        %%%%
%%%%%%%%%%%%%%%%%%%%%%%%%%%%%%%%%%%%%%%%%%%%%%%%%%%%%%%%%%%%%%%%%%%%%%%%%%%%%%%%
%%%% sec:conclusions %%%%
%%%%%%%%%%%%%%%%%%%%%%%%%

The problem of controlling dynamical systems comprised of multiple autonomous agents, resides in the difficulty to find optimal controllers that respect the decentralized nature of the system. We postulate the use of graph neural networks as appropriate parametrizations for such controllers. Then, the problem reduces to finding the optimal set of parameters, finding the best possible controller within the space of graph neural networks, which can be achieved by leveraging the imitation learning framework.

Graph neural networks, in particular, graph convolutional neural networks and graph recurrent neural networks, exhibit several desirable properties in the context of decentralized control. Most importantly, they are naturally distributed and local operations, thus adapting seamlessly to the information structure imposed by the decentralized dynamical system. They are also capable of learning nonlinear behaviors which, knowing that most optimal controllers are nonlinear, becomes a relevant feat. Furthermore, both architectures are permutation equivariant and stable to changes in the graph. We then proved that, under an appropriate choice of description for the observations, this implies that the problem is permutation invariant, meaning that the same learned parameters are useful in a wide range of similar problems. Together, these properties warrant the scalability and transferability of the learned controllers.

We have tested the learned controllers in two proof-of-concept problems, namely, flocking and path planning. In both cases, we observed that the graph neural network controllers exhibit better performance than linear, decentralized controllers. More importantly, we showed the capabilities of these controllers to be deployed in larger environments. In the problem of flocking, controllers trained on a small team of $50$ agents work perfectly well in teams of at least $100$ agents, doubling the size; while in the problem of path planning, controllers trained in as little as $10$ agents are successful in teams of $40$ agents, adapting to a fourfold increase.

Overall, the proposed framework has shown promising results in simple, but useful settings. Moving forward, we expect to test graph neural networks as parametrizations in more complex control problems, investigating the impact and limitations of this choice. We further expect to develop more useful properties of these architectures as they pertain to the decentralized control problem, such as robustness, safety and resilience. Future research directions include studying the closed-loop stability of GNN-based controllers learned by means of imitation learning, leveraging the knowledge of the dynamical system to improve on the controllers and exploring alternative learning frameworks.

%\vfill\pagebreak

%%%%%%%%%%%%%%%%%%%%%%%%%%%%%%%%%%%%%%%%%%%%%%%%%%%%%%%%%%%%%%%%%%%%%%%%%%%%%%%%
%%%%                               APPENDIX                                 %%%%
%%%%%%%%%%%%%%%%%%%%%%%%%%%%%%%%%%%%%%%%%%%%%%%%%%%%%%%%%%%%%%%%%%%%%%%%%%%%%%%%
%%%% sec:appendix %%%%
%%%%%%%%%%%%%%%%%%%%%%

\appendices

\section{Permutation Invariance} \label{sec:appendix}

%!TEX root = 00-controlGNN.tex

%%%%%%%%%%%%%%%%%%%%%%%%%%%%%%%%%%%%%%%%%%%%%%%%%%%%%%%%%%%%%%%%%%%%%%%%%%%%%%%%
%%%%                                                                        %%%%
%%%%                               APPENDIX                                 %%%%
%%%%                                                                        %%%%
%%%%%%%%%%%%%%%%%%%%%%%%%%%%%%%%%%%%%%%%%%%%%%%%%%%%%%%%%%%%%%%%%%%%%%%%%%%%%%%%
%%%% sec:appendix %%%%
%%%%%%%%%%%%%%%%%%%%%%

\begin{proof}[Proof of Proposition~\ref{thm:permutationInvariance}]
    To prove \eqref{eq:permutationInvariance} we need to prove that the objective functions in \eqref{eq:imitationLearning} and \eqref{eq:imitationLearning_tilde} are equivalent. Let us start with the objective function in \eqref{eq:imitationLearning_tilde}
    % eq:objFunctionPermutation
    \begin{equation} \label{eq:objFunctionPermutation}
    \Big\| \sfPhi \big(\bbP^{\Tr} \bbX(t); \bbP^{\Tr} \bbS \bbP, \ccalH \big) - \Pi^{\ast}_{c}(\tbX(t)) \Big\|
    \end{equation}
    where we have replaced $\tbX$ and $\tbS$ by the corresponding definitions. We know that $\sfPhi$ is permutation equivariant as long as it is a graph filter \cite[Prop.~1]{Gama20-Stability}, a GCNN \cite[Prop.~2]{Gama20-Stability} or a GRNN \cite[Prop.~1]{Ruiz20-GRNN}, which means that
    % eq:permutationEquiv
    \begin{equation} \label{eq:permutationEquiv}
    \sfPhi(\bbP^{\Tr} \bbX; \bbP^{\Tr} \bbS \bbP, \ccalH) = \bbP^{\Tr} \sfPhi(\bbX;\bbS, \ccalH).
    \end{equation}
    We also know that, since the cost $c$ satisfies Def.~\ref{def_sys_invariance}, then $\Pi_{c}^{\ast}(\tbX(t)) = \bbP^{\Tr}\Pi_{c}^{\ast}(\bbX(t))$ [cf. \eqref{eq:generalObjective}]. Using these two facts in \eqref{eq:objFunctionPermutation} we get
    % eq:objFunctionPermutationOut
    \begin{equation} \label{eq:objFunctionPermutationOut}
    \begin{aligned}
    \Big\| & \bbP^{\Tr} \sfPhi \big( \bbX(t);  \bbS , \ccalH \big) - \bbP^{\Tr} \Pi_{c}^{\ast}(\bbX(t)) \Big\| \\ & = \Big\| \bbP^{\Tr} \Big( \sfPhi \big( \bbX(t);  \bbS , \ccalH \big) - \Pi_{c}^{\ast}(\bbX(t)) \Big) \Big\|.
    \end{aligned}
    \end{equation}
    Recall that for a graph signal $\bbX \in \reals^{N \times F}$ we have $\|\bbX\| = \sum_{f=1}^{F} \| \bbx_{f}\|_{2}$, so that
    % eq:objFunctionPermutationEquiv
    \begin{equation} \label{eq:objFunctionPermutationEquiv}
    \begin{aligned}
    \Big\| \bbP^{\Tr} \Big( \sfPhi & \big( \bbX(t);  \bbS , \ccalH \big) - \Pi_{c}^{\ast}(\bbX(t)) \Big) \Big\| \\ & = \Big\|  \sfPhi \big( \bbX(t);  \bbS , \ccalH \big) - \Pi_{c}^{\ast}(\bbX(t)) \Big\|.
    \end{aligned}
    \end{equation}
    since $\bbP$ is a permutation matrix. Following \eqref{eq:objFunctionPermutation}, \eqref{eq:objFunctionPermutationOut} and \eqref{eq:objFunctionPermutationEquiv}, we get that \eqref{eq:imitationLearning_tilde} yields the same objective function as \eqref{eq:imitationLearning}, and that solving one or the other yields the same set of filter taps, hereby completing the proof.
\end{proof}

%%%%%%%%%%%%%%%%%%%%%%%%%%%%%%%%%%%%%%%%%%%%%%%%%%%%%%%%%%%%%%%%%%%%%%%%%%%%%%%%
%%%%                             BIBLIOGRAPHY                               %%%%
%%%%%%%%%%%%%%%%%%%%%%%%%%%%%%%%%%%%%%%%%%%%%%%%%%%%%%%%%%%%%%%%%%%%%%%%%%%%%%%%

\bibliographystyle{bibFiles/IEEEtranD}
\bibliography{bibFiles/myIEEEabrv,bibFiles/biblioControlGNN}

\end{document}